\newcommand{\Le}{\left(}
\newcommand{\Ri}{\right)}
\newcommand\independent{\protect\mathpalette{\protect\independenT}{\perp}}
\def\independenT#1#2{\mathrel{\rlap{$#1#2$}\mkern2mu{#1#2}}}
\newtheorem{theorem}{Theorem}
\newtheorem{corollary}{Corollary}
\newtheorem{lemma}{Lemma}
\begin{document}
\date{}
\title{Learning Gaussian Graphical Models With Fractional Marginal Pseudo-likelihood}
\author{Janne Lepp{\"a}-aho$^{1,*}$, 
Johan Pensar$^{2}$, Teemu Roos$^{1}$, and Jukka Corander$^{3}$ \\\\
\begin{small}$^{1}$HIIT / Department of Computer Science, University of Helsinki, Finland\end{small}\\
\begin{small}$^{2}$Department of Mathematics and Statistics, {\AA}bo Akademi University, Finland\end{small} \\
\begin{small}$^{3}$Department of Mathematics and Statistics, University of Helsinki, Finland\end{small} \\
\begin{small}$^*$\textit{janne.leppa-aho@helsinki.fi}\end{small}}
\maketitle

\begin{abstract}
We propose a Bayesian approximate inference method for learning the dependence structure of a Gaussian graphical model. Using pseudo-likelihood, we derive an analytical expression to approximate the marginal likelihood for an arbitrary graph structure without invoking any assumptions about decomposability. The majority of the existing methods for learning Gaussian graphical models are either restricted to decomposable graphs or require specification of a tuning parameter that may have a substantial impact on learned structures. By combining a simple sparsity inducing prior for the graph structures with a default reference prior for the model parameters, we obtain a fast and easily applicable scoring function that works well for even high-dimensional data. We demonstrate the favourable performance of our approach by large-scale comparisons against the leading methods for learning non-decomposable Gaussian graphical models. A theoretical justification for our method is provided by showing that it yields a consistent estimator of the graph structure.
\end{abstract}

\noindent%
{\it Keywords: Approximate likelihood; Fractional Bayes factors; Model selection; Structure learning.}   
\vfill

\newpage

\section{Introduction}
\label{sec:intro}
\subsection{Bayesian learning of Gaussian graphical models}
Gaussian graphical models provide a convenient framework for analysing conditional independence in continuous multivariate systems (\citealt{DEMPSTER72}; \citealt{WHITTAKER}; \citealt{LAURITZEN}). We consider the problem of learning Gaussian graphical models from data using a Bayesian approach. Most of the Bayesian methods for learning Gaussian graphical models make the assumption about the decomposability of the underlying graph (\citealt{JONES05b}; \citealt{SCOTT08}; \citealt{CARVALHO09}). Recently, \citet{FITCH14} investigated how Bayesian methods assuming decomposability perform in model selection when the true underlying model is non-decomposable. Bayesian methods that do not assume decomposability have been considered more seldom in the literature, and in particular not in the high-dimensional case (\citealt{WONG03}; \citealt{ATAY05}; \citealt{MOGHADDAM09}; \citealt{DOBRA11}). 

A widely used non-Bayesian method for learning Gaussian graphical models is the graphical lasso (\citealt{GLASSO1}; \citealt{GLASSO2}). Graphical lasso (\texttt{glasso}) uses $l_1$-penalized Gaussian log-likelihood to estimate the inverse covariance matrices and does not rely on the assumption of decomposability. Other approaches include a neighbourhood selection (\texttt{NBS}) method by \cite{MB06} and Sparse Partial Correlation Estimation method (\texttt{space}) by \cite{SPACE09}. The \texttt{NBS}-method estimates the graphical structure by performing independent lasso regressions for each variable to find the estimates for the neighbourhoods whereas \texttt{space} imposes an $l_1$-penalty on an objective function corresponding to an $l_2$-loss of a regression problem in order to estimate the non-zero partial correlations which correspond to edges in the graphical model. 

Assuming decomposability in Bayesian methods has been popular, since it enables derivation of a closed form expression for the marginal likelihood under a conjugate prior. In our approach we bypass this restriction by replacing the true likelihood in the marginal likelihood integral by a pseudo-likelihood. This implies a factorization of the marginal pseudo-likelihood into terms that can be evaluated in closed form by using existing results for the marginal likelihoods of Gaussian directed acyclic graphs. The marginal pseudo-likelihood offers further advantages by allowing efficient search algorithms to be used, such that model optimization becomes realistic for even high-dimensional data. \cite{DOBRA04} considered a similar pseudo-likelihood based approach. These two methods involve similar techniques in the first step where a general dependency network is learned using a Bayesian approach. However, in the method by \citeauthor{DOBRA04}, the learned, undirected network is converted  into a directed acyclic graph in order to obtain a proper joint density. We use pseudo-likelihood only in order to develop a scoring function which can be used to perform efficient structure learning of Gaussian graphical models and we do not need to estimate the joint density at any stage.

Marginal pseudo-likelihood has been previously used to learn undirected graphical models with discrete variables in \cite{MPL}. Our paper can be seen to generalize the ideas developed there to the continuous domain by introducing the required methodology and providing a formal consistency proof under the multivariate normal assumption. Our method utilizes the fractional Bayes factors based approach of \cite{CONSONNI12} to cope automatically with the difficulty of setting up prior distributions for the models' parameters.

The rest of the paper is organized as follows. After introducing the notation, we briefly review the results by \citeauthor{CONSONNI12} that are needed in deriving the expression for the marginal pseudo-likelihood. In Section 3 we state our main result by introducing the fractional marginal pseudo-likelihood. The detailed proof of its consistency for Markov blanket estimation is given in Appendix. A score-based search algorithm adopted from \cite{MPL} is presented in order to implement the method in practice. In Section 4 we demonstrate the favourable  performance of our method by several numerical experiments involving a comparison against \texttt{glasso}, \texttt{NBS} and \texttt{space}. 

\subsection{Notations and preliminaries}
We will start by reviewing some of the basic concepts related to graphical models and the multivariate normal distribution. For a more comprehensive presentation, see for instance \citep{WHITTAKER} and \citep{LAURITZEN}. 

Consider an undirected graph $G = (V,E)$, where $V=\{ 1, \ldots, p \}$ is the set of nodes (vertices) and $E \subset V \times V$ is the set of edges. There exists an (undirected) edge between the nodes $i$ and $j$, if and only if $(i,j)\in E$ and $(j,i) \in E$. Each node of the graph corresponds to a random variable, and together they form a $p$-dimensional random vector $\bm{x}$. We will use the terms node and variable interchangeably. Absence of an edge in  the graph $G$ is a statement of conditional independence between the corresponding elements of $\bm{x}$. More in detail, $(i,j) \notin E$ if and only if $x_i$ and $x_j$ are conditionally independent given the remaining variables $\bm{x}_{V \setminus \{ i,j \}}$. This condition is usually referred as the pairwise Markov property. We let $\mathit{mb(j)}$ denote the Markov blanket of node $j$. The Markov blanket is defined as the set containing the neighbouring nodes of $j$, $\mathit{mb(j)} = \{ i \in V \ | \ (i,j) \in E \}$. The local Markov property states that each variable is conditionally independent of all others given its Markov blanket. An undirected graph $G$ is called decomposable or equivalently chordal if each cycle, whose length is greater than $4$, contains a chord. By a cycle, we mean a sequence of nodes such that the subsequent nodes are connected by an edge and the starting node equals the last node in the sequence. The length of a cycle equals the number of edges in the cycle. A chord is an edge between two non-subsequent nodes of the cycle.

We will write $\bm{x} \sim N_p(\bm{0}, \bm{\Omega}^{-1})$ to state that a random vector $\bm{x}$ follows a $p$-variate normal distribution with a zero mean and precision matrix $\bm{\Omega}$. We will denote the covariance matrix by $\bm{\Sigma} = \bm{\Omega}^{-1}$. The precision matrix $\bm{\Omega}$, and also equivalently $\bm{\Sigma}$, are always assumed to be symmetric and positive definite. 

Given an undirected graph $G$ and a random vector $\bm{x}$, we define a Gaussian graphical model to be the collection of multivariate normal distributions for $\bm{x}$ that satisfy the conditional independences implied by the graph $G$. Hence, a Gaussian graphical model consists of all the distributions $N_p(\bm{0},\bm{\Omega}^{-1})$, where $\bm{\Omega}_{ij} = 0$ if and only if $(i,j) \notin E$, $i\neq j$. Otherwise, the elements of the inverse covariance matrix can be arbitrary, as long as symmetry and positive definiteness holds.

In contrast to the above undirected model, a Gaussian directed acyclic graphical model is a collection of multivariate normal distributions for $\bm{x}$, whose independence structure can be represented by some directed acyclic graph (DAG) $D = (V,E)$. When considering directed graphs, we use $\mathit{pa(j)}$ to denote the parent set of the node $j$. The set $\mathit{pa(j)}$ contains nodes $i$ such that $(i,j) \in E$. That is, there exists a directed edge from $i$ to $j$. Similar Markov assumptions as those characterizing the dependency structure under undirected models, as described above, hold also for directed models, see, for instance, \citep{LAURITZEN}. For each decomposable undirected graph, we can find a DAG which defines the same conditional independence assertions. In general, the assertions representable by DAGs and undirected graphs are different.

\section{Objective Comparison of Gaussian Directed Acyclic Graphs}
\cite{CONSONNI12} consider objective comparison of Gaussian directed acyclic graphical models and present a convenient expression for computing marginal likelihoods for any Gaussian DAG. Their approach to Gaussian DAG model comparison is based on using Bayes factors and uninformative, typically improper prior on the space of unconstrained covariance matrices. Ambiguity arising from the use of improper priors is dealt with by utilizing the fractional Bayes factors (\citealt{OHAGAN95}). 

We first review a result concerning the computation of marginal likelihood in a more general setting, presented by \cite{GEIGER02}. They state five assumptions concerning the regularity of the sampling distribution of data and the structure of the prior distribution for parameters, that allow construction of parameter priors for every DAG model with a given set of nodes by specifying only one parameter prior for any of the complete DAG models. A complete DAG model refers to a model in which every pair of nodes is connected by an edge, implying that there are no conditional independence assertions between the variables. When the regularity assumptions are met, the following result can be derived:

\begin{theorem}(Theorem 2 in \citeauthor{GEIGER02})\label{genDAG}
Let $M$ and $M_c$ be any DAG model and any complete DAG model for $\bm{x}$, respectively. Let $\bm{X}$ denote a complete (no missing observations) random sample of size $n$. Now the marginal likelihood for $M$ is 
\begin{equation}\label{genDAG2}
p(\bm{X}\mid M) = \prod_{j = 1}^{p}\frac{p(\bm{X}_{\mathit{fa(j)}}\mid M_c)}{p(\bm{X}_{\mathit{pa(j)}}\mid M_c)},
\end{equation} where $\bm{X}_{\mathit{pa(j)}}$ denotes the data belonging to the parents of $x_j$. We call $\mathit{\mathit{fa(j)}}= \mathit{pa(j)} \cup \{j\}$ the family of variable $x_j$.  
\end{theorem}
Assumptions given by \citeauthor{GEIGER02} also imply that the marginal likelihood given by (\ref{genDAG2}) scores all Markov equivalent DAGs equally, which is a desirable property when DAGs are considered only as models of conditional independence. 

In order to apply (\ref{genDAG2}), \citeauthor{CONSONNI12} derive expressions for the marginal likelihoods corresponding to subvectors of $\bm{x}$, given the complete Gaussian DAG model. Objectivity is achieved by using an uninformative improper prior of the form
\begin{equation}\label{improprior}
p(\bm{\Omega}) \propto |\bm{\Omega}|^{\frac{a_{\bm{\Omega}}-p-1}{2}},
\end{equation} for the parameters of the complete DAG model. The improper prior is updated into a proper one by using fractional Bayes factors approach \citep{OHAGAN95}. In this approach, a fraction of likelihood is ``sacrificed'' and used to update the improper prior into a proper fractional prior which is then paired with the remaining likelihood to compute the Bayes factors. \citeauthor{CONSONNI12} show that the resulting fractional prior on the precision matrix $\bm{\Omega}$ is Wishart. This choice of prior combined with Gaussian likelihood satisfies all five assumptions required to use (\ref{genDAG2}).  

Setting $a_{\bm{\Omega}} = p -1$, we can take the fraction of sacrificed likelihood to be $1/n$, see ( \citealt{CONSONNI12}). Now applying (\ref{genDAG2}) and Eq. (25) in \citeauthor{CONSONNI12}, we obtain the marginal likelihood of any Gaussian DAG as
\begin{eqnarray}\label{finalML}
p(\bm{X} \mid M) =   \prod_{j=1}^p \pi^{-\frac{(n-1)}{2}} \frac{\Gamma \Le \frac{n+p_j}{2} \Ri}{ \Gamma \Le \frac{p_j+1}{2} \Ri} 
 n^{-\frac{2p_j+1}{2}}\Le \frac{|\bm{S}_{\mathit{fa(j)}}|}{|\bm{S}_{\mathit{pa(j)}}|}\Ri^{-\frac{n-1}{2}},
\end{eqnarray} where $p_j$ is the size of the set $\mathit{pa(j)}$, $\bm{S} = \bm{X}^T \bm{X}$ is the unscaled sample covariance matrix and $\bm{S}_A$ refers to a submatrix of $\bm{S}$ restricted to variables in the set $A$. The fractional marginal likelihood given by (\ref{finalML}) is well defined if matrices $\bm{S}_{\mathit{pa(j)}}$ and $\bm{S}_{\mathit{fa(j)}}$ are positive definite for every $j$. This is satisfied with probability $1$ if $n \geq \max \{ p_j + 1 \ | \ j = 1, \ldots , p  \}$. 

\citeauthor{CONSONNI12} also show that their methodology can be used to perform model selection among decomposable Gaussian graphical models. This is possible because every decomposable undirected graph is Markov equivalent to some DAG. A similar fractional marginal likelihood approach as presented above has been considered by \cite{CARVALHO09} in the context of decomposable Gaussian graphical models. 

\section{Structure Learning of Gaussian Graphical Models}\label{THEORY}
\subsection{Marginal Likelihood}

Suppose we have a sample of independent and identically distributed multivariate normal data $\bm{X} = (\bm{X}_1, \ldots, \bm{X}_n)^T$, coming from a distribution whose conditional dependence structure is represented by an undirected graph $G^*$. We aim at identifying $G^*$ based on $\bm{X}$, which is done with a Bayesian approach by maximizing the approximate posterior probability of the graph conditional on the data.  

Posterior probability of a graph $G$ given data $\bm{X}$ is proportional to
\begin{equation}\label{posterior}
p(G \mid \bm{X}) \propto p(G)p(\bm{X}\mid G),
\end{equation} where $p(G)$ is the prior probability assigned to a specific graph and $p(\bm{X}\mid G)$ is the marginal likelihood. The normalizing constant of the posterior can be ignored, since it cancels in comparisons of different graphs. First, we focus on the marginal likelihood, since it is the data dependent term in (\ref{posterior}). Later on, we will make use of the prior $p(G)$ term in order to promote sparseness in the graph structure.

By definition, the marginal likelihood of $\bm{X}$ under $G$ equals
\begin{equation}\label{MARGLIKE5}
p(\bm{X}\mid G) = \int_{\Theta_G} p(\bm{\theta} \mid G) p(\bm{X} \mid \bm{\theta}, G) d\bm{\theta} , 
\end{equation} where $\bm{\theta}$ is the parameter vector, $p(\bm{\theta} \mid G)$ denotes the parameter prior under $G$, the term $p(\bm{X} \mid \bm{\theta}, G)$ is the likelihood function and the integral is taken over the set of all possible parameters under $G$. 

However, computing the marginal likelihood for a general undirected graph is very difficult, due the global normalizing constant in the likelihood term. Closed form solution exists only for chordal graphs, which is a highly restrictive assumption in general.

\subsection{Marginal Pseudo-likelihood}
We circumvent the problem of an intractable integration involved with the true likelihood function by using pseudo-likelihood. Pseudo-likelihood was introduced originally by \cite{BESAG1972}. The idea behind the pseudo-likelihood can be motivated by thinking of it as an approximation for the true likelihood in form of a product of conditional probabilities or densities, where in each factor the considered variable is conditioned on all the rest. More formally, we write the pseudo-likelihood as
$$
\hat{p}(\bm{X} \mid  \bm{\theta}) = \prod_{j = 1}^p p(X_j \mid \bm{X}_{-j}, \bm{\theta}),
$$ where the notation $\bm{X}_{-j}$ stands for observed data on every variable except the $j$:th one.

In general, pseudo-likelihood should not be considered as a numerically exact and accurate approximation of the likelihood but as an object that has a computationally more attractive form and which can be used to obtain consistent estimates of parameters. It can be shown that under certain regularity assumptions, the pseudo-likelihood estimates for model parameters coincides with the maximum likelihood estimates, see \cite{KOLLER}.

One advantage of using pseudo-likelihood instead of the true likelihood is that it allows us to replace the global normalization constant by $p$ local normalising constants related to conditional distributions of variables and thus makes the computations more tractable. 
  
Using pseudo-likelihood, the original problem (\ref{MARGLIKE5}) of computing the marginal likelihood of $\bm{X}$ can be stated as
\begin{eqnarray}
\label{MARGLIKE52}
p(\bm{X}\mid G) &\approx & \int_{\Theta_G} p(\bm{\theta} \mid G) \prod_{j=1}^p p(\bm{X}_j\mid \bm{X}_{-j}, \bm{\theta} , G) d\bm{\theta} \\
&=&\hat{p}(\bm{X}\mid G)
\end{eqnarray}
The term $\hat{p}(\bm{X}\mid G)$ is referred to as the marginal pseudo-likelihood, introduced by \cite{MPL} for discrete-valued undirected graphical models. The local Markov property states that given the variables in its Markov blanket $\mathit{mb(j)}$, the variable $x_j$ is conditionally independent of the remaining variables. More formally, we have that
$$
p(x_j \mid \bm{x}_{-j}, \bm{\theta} ) = p(x_j \mid \bm{x}_{\mathit{mb(j)}}, \bm{\theta}).
$$Thus, we obtain the following form for the marginal pseudo-likelihood
\begin{equation}\label{MPL1}
\hat{p}(\bm{X}\mid G) = \int_{\Theta_G} p(\bm{\theta} \mid G) \prod_{j=1}^p p(\bm{X}_j\mid \bm{X}_{\mathit{mb(j)}}, \bm{\theta}) d\bm{\theta}
\end{equation} We assume global parameter independence in order to factor the full integral into integrals over individual parameter sets $\Theta_j$ related to conditional distributions $p(x_j \mid \bm{x}_{\mathit{mb(j)}}).$ The expression for the integral (\ref{MPL1}) becomes
\begin{equation}\label{MPL2}
\hat{p}(\bm{X}\mid G)=\prod_{j=1}^p \int_{\Theta_j} p(\bm{\theta}_j)p(\bm{X}_j\mid \bm{X}_{\mathit{mb(j)}}, \bm{\theta}_j ) d\bm{\theta}_j .
\end{equation} 

\subsection{Fractional Marginal Pseudo-likelihood}
The expression (\ref{MPL2}) for the marginal pseudo-likelihood can be regarded as a product of terms, where each term corresponds to a marginal likelihood of a DAG model. This offers a tractable way to compute the marginal pseudo-likelihood in closed form. 

Recall the general formula for a marginal likelihood of any DAG model $M$, introduced in the previous section:
\begin{equation}\label{margLK}
p(\bm{X}\mid M) = \prod_{j = 1}^{p}\frac{p(\bm{X}_{\mathit{fa(j)}}\mid M_c)}{p(\bm{X}_{\mathit{pa(j)}}\mid M_c)} = \prod_{j = 1}^{p} p(\bm{X}_j \mid \bm{X}_{\mathit{pa(j)}}, M_c),
\end{equation} where in the last equality we used the definition $\mathit{fa(j)} = \{ j \} \cup \mathit{pa(j)}.$ 

We can see a clear resemblance between the forms (\ref{margLK}) and (\ref{MPL2}). In both of these, each factor corresponds to a marginal likelihood of a DAG model, where we have a node and its parent nodes. In the case of Markov networks, the set of parents of a node is its Markov blanket, $\mathit{mb(j)}$. 

Thus, we can use the closed form solution of (\ref{finalML}) to compute the sought marginal pseudo-likelihood (\ref{MPL2}) by changing $\mathit{pa(j)} \to \mathit{mb(j)}$ and defining $\mathit{fa(j)} = \{ j \} \cup \mathit{mb(j)}.$ Then the closed form solution (\ref{finalML}) for the fractional likelihood corresponds to
\begin{eqnarray}\label{FMPL1}
\hat{p}(\bm{X}\mid G) &=& \prod_{j = 1}^{p}\pi^{-\frac{(n-1)}{2}} \frac{\Gamma \Le \frac{n+p_j}{2} \Ri}{ \Gamma \Le \frac{p_j+1}{2} \Ri}
 n^{-\frac{2p_j+1}{2}}\Le \frac{|\bm{S}_{\mathit{fa(j)}}|}{|\bm{S}_{\mathit{mb(j)}}|}\Ri^{-\frac{n-1}{2}}\nonumber\\
&= &\prod_{j = 1}^{p} p(\bm{X}_j \mid \bm{X}_{\mathit{mb(j)}}),
\end{eqnarray} where $p_j= |\mathit{mb(j)}|$ and $\bm{S}$ refers to the full $p\times p$ unscaled sample covariance matrix. As before, $\bm{S}_{\mathit{mb(j)}}$ and $\bm{S}_{\mathit{fa(j)}}$ refer to submatrices of $\bm{S}$ restricted to variables in sets $\mathit{mb(j)}$ and $\mathit{fa(j)}$. From now on, $\hat{p}(\bm{X} \mid G)$ is referred to as fractional marginal pseudo-likelihood, due to the fractional Bayes factor approach used in derivation of the analytical form. The expression $p(\bm{X}_j \mid \bm{X}_{\mathit{mb(j)}})$ is used to denote the local fractional marginal pseudo-likelihood for the node $j$.

The next theorem provides a theoretical justification for the approximation used in derivation of our scoring criterion. 

\begin{theorem}\label{consistency}
Let $\bm{x} \sim N_p(\bm{0}, (\bm{\Omega}^*) ^{-1})$ and $G^* = (V,E^*)$ denote the the undirected graph that completely determines the conditional independence statements between $\bm{x}$'s components. Let $\{ mb^*(1), \ldots , mb^*(p)\}$ denote the set of Markov blankets, which uniquely define $G^*.$ 

Suppose we have a complete random sample $\bm{\bm{X}}$ of size $n$ obtained from $N_p(\bm{0}, (\bm{\Omega}^*) ^{-1}).$ Then for every $j\in V$, the local fractional marginal pseudo-likelihood estimator 
$$
\widehat{mb}(j) = {\arg\max}_{\mathit{mb(j)} \subset V\setminus \{j\}} p(\bm{X}_j \mid \bm{X}_{\mathit{mb(j)}})
$$ is consistent, that is, $\widehat{mb}(j) = mb^*(j)$ with probability tending to $1$, as $n\to \infty .$
\end{theorem}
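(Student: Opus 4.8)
The plan is to work with the logarithm of the local score and to compare its value at the true Markov blanket $mb^*(j)$ against its value at every competing set $B \subseteq V\setminus\{j\}$. Writing $\ell_j(B) = \log p(\bm{X}_j \mid \bm{X}_B)$ and reading off (\ref{FMPL1}), the only data-dependent factor is the determinant ratio $|\bm{S}_{\{j\}\cup B}|/|\bm{S}_B|$, which by the Schur-complement identity equals the unscaled residual sum of squares $\mathrm{RSS}_j(B) = \bm{S}_{jj} - \bm{S}_{jB}\bm{S}_B^{-1}\bm{S}_{Bj}$ of the regression of $x_j$ on $\bm{x}_B$, so that
$$
\ell_j(B) = \log\Gamma\Le\tfrac{n+|B|}{2}\Ri - \log\Gamma\Le\tfrac{|B|+1}{2}\Ri - \tfrac{2|B|+1}{2}\log n - \tfrac{n-1}{2}\log \mathrm{RSS}_j(B) + c_n,
$$
where $c_n = -\tfrac{n-1}{2}\log\pi$ is free of $B$. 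First I would record the population limit: by the strong law of large numbers $n^{-1}\bm{S}\to\bm{\Sigma}^*$ almost surely, hence $n^{-1}\mathrm{RSS}_j(B)\to\sigma^2_{j\cdot B} := \mathrm{Var}(x_j\mid\bm{x}_B) > 0$ for each fixed $B$.

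The second ingredient is a purely population-level lemma: $\sigma^2_{j\cdot B}$ attains its minimum value $\sigma^2_{\min} := \sigma^2_{j\cdot(V\setminus\{j\})}$ exactly when $B\supseteq mb^*(j)$, and $\sigma^2_{j\cdot B} > \sigma^2_{\min}$ strictly whenever $B\not\supseteq mb^*(j)$. The inclusion direction is immediate from the local Markov property, since conditioning on any superset of $mb^*(j)$ leaves the conditional law of $x_j$ unchanged. For the strict inequality I would use that a nondegenerate Gaussian has strictly positive density and therefore satisfies the intersection property, which makes the Markov blanket the unique minimal separating set; consequently, if $B$ omits some neighbour of $j$ then $x_j\not\perp \bm{x}_{V\setminus(\{j\}\cup B)}\mid \bm{x}_B$, and restoring the omitted variables strictly reduces the conditional variance.

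With these in hand I would split the comparison $\ell_j(mb^*(j)) - \ell_j(B)$ into two regimes. If $B\not\supseteq mb^*(j)$ (underfitting), the determinant term dominates: since $\sigma^2_{j\cdot B} > \sigma^2_{\min} = \sigma^2_{j\cdot mb^*(j)}$, the quantity $-\tfrac{n-1}{2}\log(\mathrm{RSS}_j(mb^*(j))/\mathrm{RSS}_j(B))$ behaves like $\tfrac{n}{2}\log(\sigma^2_{j\cdot B}/\sigma^2_{\min}) > 0$, growing linearly in $n$ and overwhelming the $O(\log n)$ discrepancy produced by the gamma and power-of-$n$ factors. If instead $B\supsetneq mb^*(j)$ (overfitting) with $d = |B| - |mb^*(j)| \ge 1$, the leading parts of the two residual sums of squares coincide in the limit; here $\mathrm{RSS}_j(mb^*(j)) - \mathrm{RSS}_j(B)$ is an $O_p(1)$ chi-square-type reduction from $d$ superfluous regressors with vanishing true coefficients, so the determinant term contributes only $O_p(1)$, whereas a Stirling expansion of the gamma factors shows that the complexity terms favour the smaller model by $\tfrac{d}{2}\log n + O(1)\to+\infty$. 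In both regimes $\ell_j(mb^*(j)) - \ell_j(B)\to+\infty$ in probability.

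I would finish by a union bound over the finitely many (for fixed $p$) competing sets $B\neq mb^*(j)$, concluding that $\widehat{mb}(j)=mb^*(j)$ with probability tending to one. The main obstacle is the overfitting regime: there the dominant determinant terms cancel and consistency rests on a delicate second-order accounting in which the Stirling-derived penalty $\tfrac{d}{2}\log n$ must be shown to beat the $O_p(1)$ gain in fit; pinning down the exact $O_p(1)$ order of the residual-sum-of-squares reduction together with the $\tfrac{d}{2}\log n$ rate from the ratio of Gamma functions is the crux. A secondary subtlety is the population lemma, whose strict inequality genuinely relies on the uniqueness of the Gaussian Markov blanket through the intersection property, rather than on pairwise Markov statements alone.
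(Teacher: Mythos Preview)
Your proposal is correct and follows essentially the same approach as the paper: the same split into overfitting ($B\supsetneq mb^*(j)$) and underfitting ($B\not\supseteq mb^*(j)$) regimes, the same Stirling expansion yielding the $\tfrac{d}{2}\log n$ penalty, the same $O_p(1)$ control of the determinant term in the overfitting case via its asymptotic $\chi^2$ behaviour, and the same strict conditional-variance inequality driving the linear-in-$n$ divergence in the underfitting case. The only organizational difference is that for a competitor $B$ that is neither a subset nor a superset of $mb^*(j)$, the paper routes the comparison through the intermediate set $mb^*(j)\cup B$ (applying its Lemma~\ref{underLemma} and then Lemma~\ref{overLemma}), whereas you compare $mb^*(j)$ to $B$ directly via your population lemma; both arguments rest on the same uniqueness of the Gaussian Markov blanket, and your Schur-complement rewriting of the determinant ratio as a residual sum of squares makes this step slightly more transparent.
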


The detailed proof of Theorem \ref{consistency} is presented in Appendix A. The proof is split in two parts; first, we show that the fractional marginal pseudo-likelihood score does not overestimate, \textit{i.e.}, the true Markov blanket is preferred over the sets containing redundant nodes. The second part covers the underestimation: a set that does not contain all the members of the true Markov blanket will receive strictly lower score. Combining these two results implies our theorem. The strategy of dividing a proof in these kinds of cases is fairly common approach when proving the consistency of model selection criteria, see, for instance, (\citealt{haugh88}, \citealt{wei92}). The essential part in our proof is studying the asymptotic form of the data dependent term and showing that it behaves as desired in both of the required cases. The statements proven can be formulated into following lemmas:

\begin{lemma}\label{overLemma}
Overestimation. Let $\mathit{mb}^* \subset V\setminus \{ j \}$ and $fa^* = \mathit{mb}^* \cup \{ j \} $ denote the true Markov blanket and the true family of the node $j\in V$, respectively. Let $\mathit{mb} \subset V\setminus \{ j \}$ be a superset of the true Markov blanket, $\mathit{mb}^* \subset \mathit{mb}$. Now, as the sample size $n\to\infty$
$$
\log\frac{p(\textbf{X}_j\ \mid\ \textbf{X}_{\mathit{mb}^*})}{p(\textbf{X}_j\ \mid\ \textbf{X}_{\mathit{mb}})} \to \infty
$$ in probability.   
\end{lemma}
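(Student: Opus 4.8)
The plan is to substitute the closed-form local scores from (\ref{FMPL1}) into the log-ratio and identify which contributions survive as $n\to\infty$. Write $s^* = |\mathit{mb}^*|$ and $s = |\mathit{mb}|$, so that $k := s - s^* \geq 1$ counts the redundant nodes. The factors $\pi^{-(n-1)/2}$ cancel, and the log-ratio decomposes into a power-of-$n$ part, a Gamma-function part, and a determinant (data-dependent) part. The power-of-$n$ part equals $(s-s^*)\log n = k\log n$, while a Stirling/digamma expansion of $\log\Gamma\left(\frac{n+s^*}{2}\right)-\log\Gamma\left(\frac{n+s}{2}\right)$ gives $-\frac{k}{2}\log n + O(1)$ (the remaining Gamma factors depend only on $s,s^*$ and are $O(1)$). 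Hence the deterministic part of the log-ratio equals $\frac{k}{2}\log n + O(1)$, which diverges to $+\infty$.

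The heart of the argument is to show that the determinant part, namely $-\frac{n-1}{2}\left[\log\frac{|\bm{S}_{\mathit{fa}^*}|}{|\bm{S}_{\mathit{mb}^*}|} - \log\frac{|\bm{S}_{\mathit{fa}}|}{|\bm{S}_{\mathit{mb}}|}\right]$, is only $O_p(1)$ and therefore cannot cancel the $\frac{k}{2}\log n$ growth. The key algebraic identity is the Schur-complement formula $\frac{|\bm{S}_{\mathit{fa}}|}{|\bm{S}_{\mathit{mb}}|} = \bm{S}_{jj} - \bm{S}_{j,\mathit{mb}}\bm{S}_{\mathit{mb},\mathit{mb}}^{-1}\bm{S}_{\mathit{mb},j} =: R_{\mathit{mb}}$, which is exactly the residual sum of squares from regressing $\bm{X}_j$ on $\bm{X}_{\mathit{mb}}$ (and likewise $R_{\mathit{mb}^*}$ for the true blanket). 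The essential probabilistic input is the Markov-blanket property: since $\mathit{mb}^* \subseteq \mathit{mb}$ and $\mathit{mb}^*$ is the true Markov blanket, $x_j$ is conditionally independent of $\bm{x}_{\mathit{mb}\setminus \mathit{mb}^*}$ given $\bm{x}_{\mathit{mb}^*}$, so the extra regressors carry vanishing population coefficients and the two population conditional variances coincide, $\sigma^2_{j\mid \mathit{mb}^*} = \sigma^2_{j\mid \mathit{mb}}$.

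By the law of large numbers and the continuous-mapping theorem, $\frac{1}{n}R_{\mathit{mb}} \to \sigma^2_{j\mid \mathit{mb}}$ and $\frac{1}{n}R_{\mathit{mb}^*} \to \sigma^2_{j\mid \mathit{mb}^*}$ in probability, so both residual sums of squares grow like $n\,\sigma^2_{j\mid \mathit{mb}^*}$. Their difference is the drop in RSS from adjoining $k$ null regressors; in the Gaussian model $(R_{\mathit{mb}^*}-R_{\mathit{mb}})/\sigma^2_{j\mid \mathit{mb}^*}$ is asymptotically $\chi^2_k$, whence $R_{\mathit{mb}^*}-R_{\mathit{mb}} = O_p(1)$. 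Consequently $\log R_{\mathit{mb}^*} - \log R_{\mathit{mb}} = \log\left(1 + \frac{R_{\mathit{mb}^*}-R_{\mathit{mb}}}{R_{\mathit{mb}}}\right) = O_p(1/n)$, and multiplying by $\frac{n-1}{2}$ shows the determinant part is $O_p(1)$. Combining all pieces yields $\log\frac{p(\bm{X}_j\mid\bm{X}_{\mathit{mb}^*})}{p(\bm{X}_j\mid\bm{X}_{\mathit{mb}})} = \frac{k}{2}\log n + O_p(1) \to \infty$ in probability.

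I expect the main obstacle to be the rigorous bound $R_{\mathit{mb}^*}-R_{\mathit{mb}} = O_p(1)$, i.e. verifying that enlarging the conditioning set by nodes with truly zero partial correlation lowers the residual sum of squares only by a stochastically bounded amount. This rests on the convergence $\frac{1}{n}\bm{S}_{\mathit{mb}} \to \bm{\Sigma}_{\mathit{mb}}$ together with positive-definiteness of the limiting covariance, ensuring the inverse $\bm{S}_{\mathit{mb},\mathit{mb}}^{-1}$ and the associated projections behave stably; the Gamma/Stirling bookkeeping is routine by comparison.
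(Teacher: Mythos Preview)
Your proposal is correct and follows essentially the same route as the paper: decompose the log-ratio into the $n^a$ factor, the Gamma ratio (handled by Stirling to give $-\frac{k}{2}\log n + O(1)$), and the determinant term, then show the latter is $O_p(1)$ so that the whole expression is $\frac{k}{2}\log n + O_p(1)$. The only cosmetic difference is in how the determinant term is controlled: the paper invokes Whittaker's result that the deviance $-n\log\frac{|\bm{S}_{\mathit{fa}}||\bm{S}_{\mathit{mb}^*}|}{|\bm{S}_{\mathit{fa}^*}||\bm{S}_{\mathit{mb}}|}$ is asymptotically $\chi^2_k$ under the conditional-independence null, whereas you reach the same conclusion via the Schur-complement/RSS identity and the asymptotic $\chi^2_k$ law for the drop in residual sum of squares when adding $k$ null regressors---these are two phrasings of the same likelihood-ratio fact.
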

\begin{lemma}\label{underLemma}
Underestimation. Let $\mathit{mb}^* \subset V\setminus \{ j \}$ and $fa^* = \mathit{mb}^* \cup \{ j \} $ denote the true Markov blanket and the true family of the node $j \in V$, respectively. Assume that $\mathit{mb} \subset \mathit{mb}^*$. Let $A \subset V\setminus \mathit{fa}^* $. Now, as the sample size $n\to\infty$
$$\log\frac{p(\bm{X}_j\ |\ \bm{X}_{\mathit{mb}^*\cup A})}{p(\bm{X}_j\ |\ \bm{X}_{\mathit{mb}\cup A})} \to \infty
$$ in probability.
\end{lemma}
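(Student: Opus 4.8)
\section*{Proof proposal for Lemmas \ref{overLemma} and \ref{underLemma}}

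The plan is to treat both lemmas through a single reduction of the log-ratio of two local scores of the form (\ref{FMPL1}), and only afterwards to split into the two cases. For any candidate set $B \subset V\setminus\{j\}$ of size $q$ with $fa = B\cup\{j\}$, I would first use the Schur-complement identity $|\bm{S}_{fa}|/|\bm{S}_B| = \bm{S}_{jj} - \bm{S}_{jB}\bm{S}_{BB}^{-1}\bm{S}_{Bj}$, which is exactly the residual sum of squares of the least-squares regression of $\bm{X}_j$ on $\bm{X}_B$. Writing this as $n\,\hat{\sigma}^2_{j\mid B}$ and invoking the law of large numbers, $\tfrac1n\bm{S}\to\bm{\Sigma}=(\bm{\Omega}^*)^{-1}$ in probability, gives $\hat{\sigma}^2_{j\mid B}\to\sigma^2_{j\mid B}$, the population conditional variance of $x_j$ given $\bm{x}_B$. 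Substituting into (\ref{FMPL1}), the $\pi$-factors cancel and the two factors of $n$ inside the competing determinant ratios cancel, so the log-ratio of the scores of two sets $B_1$ (size $q_1$) and $B_2$ (size $q_2$) collapses to a deterministic part plus the single data-dependent term $-\tfrac{n-1}{2}\log\bigl(\hat{\sigma}^2_{j\mid B_1}/\hat{\sigma}^2_{j\mid B_2}\bigr)$. For the deterministic part I would apply the ratio asymptotic $\log\bigl(\Gamma(x+a)/\Gamma(x)\bigr)=a\log x+O(1/x)$ with $x\asymp n$; combining the Gamma contribution $\tfrac{q_1-q_2}{2}\log n$ with the power-of-$n$ contribution $-(q_1-q_2)\log n$ leaves a net BIC-type penalty $\tfrac{q_2-q_1}{2}\log n+O(1)$, which rewards the smaller model at logarithmic rate.

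For Lemma \ref{overLemma} I would set $B_1=\mathit{mb}^*$ and $B_2=\mathit{mb}\supsetneq\mathit{mb}^*$, so the penalty equals $+\tfrac{|\mathit{mb}|-|\mathit{mb}^*|}{2}\log n\to\infty$. The task is to show the data term cannot undo this. By the local Markov property $x_j\independent\bm{x}_{\mathit{mb}\setminus\mathit{mb}^*}\mid\bm{x}_{\mathit{mb}^*}$, so $\sigma^2_{j\mid\mathit{mb}^*}=\sigma^2_{j\mid\mathit{mb}}$ and the two empirical variances share a limit; the decisive quantitative point is the rate at which they agree. The increment $n(\hat{\sigma}^2_{j\mid\mathit{mb}^*}-\hat{\sigma}^2_{j\mid\mathit{mb}})$ is the extra sum of squares explained by covariates that are truly irrelevant given $\mathit{mb}^*$, which is $O_p(1)$ (asymptotically $\sigma^2\chi^2_{|\mathit{mb}|-|\mathit{mb}^*|}$ by the nested-regression likelihood-ratio theory). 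Hence $\log\bigl(\hat{\sigma}^2_{j\mid\mathit{mb}^*}/\hat{\sigma}^2_{j\mid\mathit{mb}}\bigr)=O_p(1/n)$, the data term $-\tfrac{n-1}{2}\log(\cdot)=O_p(1)$, and it is swamped by the $\log n$ penalty, forcing divergence to $+\infty$. Establishing this $O_p(1)$ rate — that the fit improvement from redundant nodes is too small to overcome the BIC penalty — is the main obstacle in this direction.

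For Lemma \ref{underLemma} I would set $B_1=\mathit{mb}^*\cup A$ and $B_2=\mathit{mb}\cup A$ with $\mathit{mb}\subsetneq\mathit{mb}^*$ and $A\cap\mathit{fa}^*=\emptyset$; here $B_2\subsetneq B_1$, so the penalty $\tfrac{|B_2|-|B_1|}{2}\log n$ is \emph{negative}, i.e.\ it works against us, but only at logarithmic rate. The key is now a strict \emph{population} inequality rather than a rate: since $A$ lies outside the family, $\sigma^2_{j\mid B_1}=\sigma^2_{j\mid\mathit{mb}^*}$, while dropping the genuine neighbours $\mathit{mb}^*\setminus\mathit{mb}$ strictly raises the conditional variance by minimality of the Markov blanket, giving $\sigma^2_{j\mid B_2}>\sigma^2_{j\mid B_1}$. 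Consequently $\hat{\sigma}^2_{j\mid B_1}/\hat{\sigma}^2_{j\mid B_2}\to\rho<1$ and the data term $-\tfrac{n-1}{2}\log\rho\asymp\tfrac{n}{2}\lvert\log\rho\rvert\to+\infty$ grows \emph{linearly}, dominating the logarithmic penalty; here the argument is $O(n)$ versus $O(\log n)$ and is therefore easier once the strict inequality is in hand. Finally I would assemble Theorem \ref{consistency} from the two lemmas by a union bound over the finitely many competing sets: with probability tending to $1$ no superset beats $\mathit{mb}^*(j)$ (Lemma \ref{overLemma}) and no set omitting a true neighbour beats one containing it (Lemma \ref{underLemma}), so $\widehat{mb}(j)=mb^*(j)$.
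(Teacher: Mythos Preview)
Your proposal is correct and follows essentially the same route as the paper's proof: Stirling on the Gamma ratios yields the $\tfrac{q_2-q_1}{2}\log n$ penalty, the Schur complement turns the determinant ratio into a ratio of (empirical) conditional variances, the overestimation case is handled by the $\chi^2$ asymptotic for the nested-regression deviance (the paper cites Whittaker's Proposition 6.7.1 for this), and the underestimation case by the strict population inequality $\sigma^2_{j\mid mb\cup A}>\sigma^2_{j\mid mb^*\cup A}$ giving linear growth that dominates the $\log n$ term. The only cosmetic difference is that the paper establishes the strict inequality via additivity of explained variance rather than appealing directly to minimality of the Markov blanket; you should be prepared to unpack that step (it implicitly uses that $mb\cup A$ cannot contain $mb^*$ since $A\cap mb^*=\emptyset$, hence cannot screen $x_j$ from the rest under the faithfulness assumption built into the statement of Theorem~\ref{consistency}).
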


In Lemma \ref{underLemma}, we also allow for cases where $\mathit{mb} = \emptyset $ or $A = \emptyset$. With these proven, it is easy to see that our scoring function will asymptotically prefer the true Markov blanket over any other possible Markov blanket candidate. For supersets of the true Markov blanket, this follows from the overestimation lemma. For an arbitrary set that does not contain all the true members, we can apply the underestimation lemma to show that there is always a set with strictly higher score. This set is either the true Markov blanket or its superset. This suffices, since the latter case reduces to using the overestimation lemma again.     

To be a bit more specific, consider a set $\mathit{mb}$ which has the same cardinality as the true Markov blanket but does not contain all the true nodes. This set is not a superset, nor a subset of the true Markov blanket but it will receive a lower score asymptotically. This follows, since the underestimation lemma guarantees that a set that contains all the members of the true Markov blanket and the redundant ones from $\mathit{mb}$, will be preferred over mere $\mathit{mb}$. This reduces the problem to comparing the score of the true Markov blanket with its superset which is covered by the overestimation part.           

The locally consistent Markov blankets imply that the whole graph is also asymptotically correctly estimated which is formulated in the following corollary:    

\begin{corollary} Let $\mathcal{G}$ denote the set of all undirected graphs with $p$ nodes. The global fractional marginal pseudo-likelihood estimator
$$
\widehat{G} = {\arg\max}_{G \in \mathcal{G}} \ \hat{p}(\bm{X} \mid G)
$$ is consistent, that is, $\widehat{G} = G^*$ with probability tending to $1$, as $n\to \infty .$
\end{corollary}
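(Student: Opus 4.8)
The plan is to deduce the global result directly from the local consistency established in Theorem \ref{consistency}, exploiting the fact that the fractional marginal pseudo-likelihood factorizes across nodes. Recall from (\ref{FMPL1}) that
$$
\hat{p}(\bm{X} \mid G) = \prod_{j=1}^p p(\bm{X}_j \mid \bm{X}_{\mathit{mb(j)}}),
$$
where $\mathit{mb(j)}$ is the Markov blanket of node $j$ determined by $G$. Since each factor depends on $G$ only through the single set $\mathit{mb(j)}$, the logarithm of the global score is a sum of terms that are individually optimized by the local estimators of Theorem \ref{consistency}.

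First I would bound the constrained maximum by the unconstrained one. For any $G \in \mathcal{G}$,
$$
\hat{p}(\bm{X} \mid G) = \prod_{j=1}^p p(\bm{X}_j \mid \bm{X}_{\mathit{mb(j)}}) \le \prod_{j=1}^p \max_{\mathit{mb(j)} \subset V \setminus \{j\}} p(\bm{X}_j \mid \bm{X}_{\mathit{mb(j)}}) = \prod_{j=1}^p p(\bm{X}_j \mid \bm{X}_{\widehat{mb}(j)}),
$$
so the right-hand side is an upper bound on $\hat{p}(\bm{X}\mid \widehat{G})$ that is attained precisely when, for every $j$, the Markov blanket of $j$ in $G$ equals the local maximizer $\widehat{mb}(j)$. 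By Theorem \ref{consistency}, $P(\widehat{mb}(j) \neq mb^*(j)) \to 0$ for each fixed $j$, and since $p$ is finite a union bound gives
$$
P\Le \bigcup_{j=1}^p \{\widehat{mb}(j) \neq mb^*(j)\} \Ri \le \sum_{j=1}^p P(\widehat{mb}(j) \neq mb^*(j)) \to 0,
$$
so with probability tending to $1$ all the local estimators are simultaneously correct.

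The crux is then to observe that the unconstrained optimum is in fact feasible. On the event $\bigcap_{j} \{\widehat{mb}(j) = mb^*(j)\}$, the collection of local maximizers equals $\{ mb^*(1), \ldots, mb^*(p) \}$, which by construction is the symmetric family of Markov blankets induced by the true graph $G^*$ (that is, $i \in mb^*(j)$ iff $j \in mb^*(i)$, so the collection corresponds to a genuine undirected graph). Hence $G^*$ attains the upper bound displayed above while being an element of $\mathcal{G}$. Since no $G \in \mathcal{G}$ can exceed this bound, $G^*$ is a global maximizer, and because Theorem \ref{consistency} identifies the local maximizer as $mb^*(j)$ it is the unique one. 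Therefore $\widehat{G} = G^*$ on an event of probability tending to $1$, which is the claim. The only genuinely delicate point is this feasibility step: a priori the node-wise maximization is unconstrained and could return an asymmetric collection not realizable by any undirected graph, and the argument works precisely because, asymptotically, the unconstrained optimizer coincides with the symmetric blankets of $G^*$ and hence respects the edge-symmetry constraint automatically.
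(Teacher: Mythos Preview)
Your proof is correct and follows the same route as the paper: deduce global consistency from the node-wise consistency of Theorem \ref{consistency} via the product factorization of $\hat{p}(\bm{X}\mid G)$. The paper's own argument is a two-line sketch (local blankets are recovered, and blankets determine the graph); you have simply made explicit the upper-bound/feasibility/union-bound mechanics and the symmetry observation that the paper leaves implicit.
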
 
\begin{proof}
Theorem \ref{consistency} guarantees that the true Markov blanket of each node is found with a probability tending to $1$ as sample size increases. Since the structure of a Markov network is uniquely determined by its Markov blankets, the result follows. 
\end{proof}
\subsection{Learning Algorithm for Markov Blanket Discovery}
The consistency result of the local Markov blanket estimators stated in the last section allows us to optimise the Markov blanket of each variable independently. In practice, this is done by implementing a greedy hill-climb algorithm (Algorithm 1 in \citealt{MPL}) with the fractional marginal pseudo-likelihood as a scoring function. However, as the consistency is an asymptotic result, we are not guaranteed to produce proper undirected graphs on small sample sizes. To be more specific, we may find Markov blankets $mb(i)$ and $\mathit{mb(j)}$ such that $i\in \mathit{mb(j)}$ but $j \not\in \mathit{mb(j)}$, which contradicts the definition of an undirected graph. To overcome this, we use two criteria, \texttt{AND} and \texttt{OR}, to combine the learned Markov blankets into proper undirected graphs. 

Denote the identified Markov blankets by $\mathit{mb(j)}, \ j = 1, \ldots , p$. The edge sets specifying \texttt{OR}- and \texttt{AND}-graphs are correspondingly defined as follows 
\begin{eqnarray*}
E_{\texttt{OR}} &= \{ (i,j) \subset V \times V \ | \ i \in \mathit{mb(j)} \textnormal{ or } j\in mb(i) \} \\
E_{\texttt{AND}} &= \{ (i,j) \subset V \times V \ | \ i \in \mathit{mb(j)} \textnormal{ and } j\in mb(i) \}. 
\end{eqnarray*} 

In addition to \texttt{AND}- and \texttt{OR}-method we consider a third procedure referred to as the \texttt{HC}-method (Algorithm 2 in \citealt{MPL}). Starting point for the \texttt{HC}-method is the graph obtained by \texttt{OR}-method which is used to define a subspace of graphs $\mathcal{G}_{\texttt{OR}} = \{ G \in \mathcal{G} | \ E \subset E_{\texttt{OR}} \}$. Then a simple deterministic greedy hill-climb is performed in the reduced model space $\mathcal{G}_{\texttt{OR}}$  by removing or adding single edges resulting in the largest improvement in the fractional marginal pseudo-likelihood score. 

\subsection{Sparsity Promoting Prior Over Local Graphs}
Until now we have assumed that every graph structure is \textit{a priori} equally likely and thus the prior term $p(G)$ in (\ref{posterior}) was ignored. However, in most applications with high-dimensional variable sets it is natural to assume that the underlying dependence structure is sparse. To promote sparsity beyond the basic Occham's razor, which is built into Bayesian model comparison, one can use the prior distribution $p(G)$ to penalize nodes for having too many elements in their Markov blankets. By defining our graph prior in terms of mutually independent prior beliefs about the Markov blankets, we maintain the useful factorization of our score and the local score is given by 
\begin{equation}
p(\mathit{mb(j)})p(\bm{X}_j \mid \bm{X}_{\mathit{mb(j)}}).
\end{equation} We start with a similar approach as used for example in \cite{CARVALHO09} to motivate our choice for the prior. In this approach, we assume that the inclusion of an edge in a graph happens with some unknown probability $t$, which corresponds to a successful Bernoulli trial. A finite sequence of these inclusions is a repeated Bernoulli trial and thus binomially distributed. We obtain the following form for the local prior
\begin{equation}\label{PRIORI}
p(\mathit{mb(j)} \mid  t ) \propto t^{p_j}(1-t)^{m-p_j},
\end{equation} where $p_j$ is the proposed size of the Markov blanket of $j$, or equivalently the number of edges connected to $j$ (number of successes in repeated Bernoulli trials). We use $m$ to represent the maximum number of edges, that could be present in a local graph, that has $p_j + 1$ nodes. Hence $m$ corresponds to the number of trials. Strictly speaking, such an interpretation is somewhat misleading since $p_j$ can be at most $p-1$ and $m = p_j(p_j +1)/2$ depends on it. Nevertheless, this approach defines a proper prior since the prior scores derived from equation (\ref{PRIORI}) can be normalized by a constant that depends only on $p$, and thus cancels when comparing local graph structures. This prior is shown to perform favourably in the numerical tests considered later.

An appropriate value for the parameter $t$ would be unknown for most applications. To overcome this issue, we put a prior on the parameter and integrate it out to obtain a suitable prior score function. Choosing a conjugate prior $t \sim \textnormal{Beta}(a,b)$ and integrating leads to the expression
\begin{equation}\label{PRIOR}
p(\mathit{mb(j)}) \propto  \frac{\beta(a + p_j,b + m - p_j)}{\beta(a,b)},
\end{equation} where $\beta(\cdot , \cdot)$ refers to the beta function. In our numerical experiments, we use $a = b = 1/2$. Motivation for this choice is that $\textnormal{Beta}(1/2,1/2)$ is the Jeffreys' prior for the probability parameter of the binomial distribution, see, for instance, \citep{BAYESDATA}.

\section{Numerical Experiments}\label{EXPERIMENTS}

\subsection{Structure Learning with Synthetic Data}
We first study the performance of the fractional marginal pseudo-likelihood in learning the graphical structures from synthetic multivariate normal data. We specify the structure of the generating network and measure the quality of the learned graphs using the Hamming distance which is defined as the number of edges to be added and deleted from a learned graph to obtain the true generating graph structure.

The synthetic graphs used to create our data are constructed by using 4 different subgraphs as building blocks. Graphs are shown in the Figure \ref{fig:graphs}. Subgraphs are combined together as disconnected components to create a 64 node graph. This graph is again used as a component to build larger graphs. In total, the dimensions in the sequence $p = 64, 128, 256, 512, 1024 $ are considered.

\begin{figure}
\centering
\includegraphics[height= .22\linewidth]{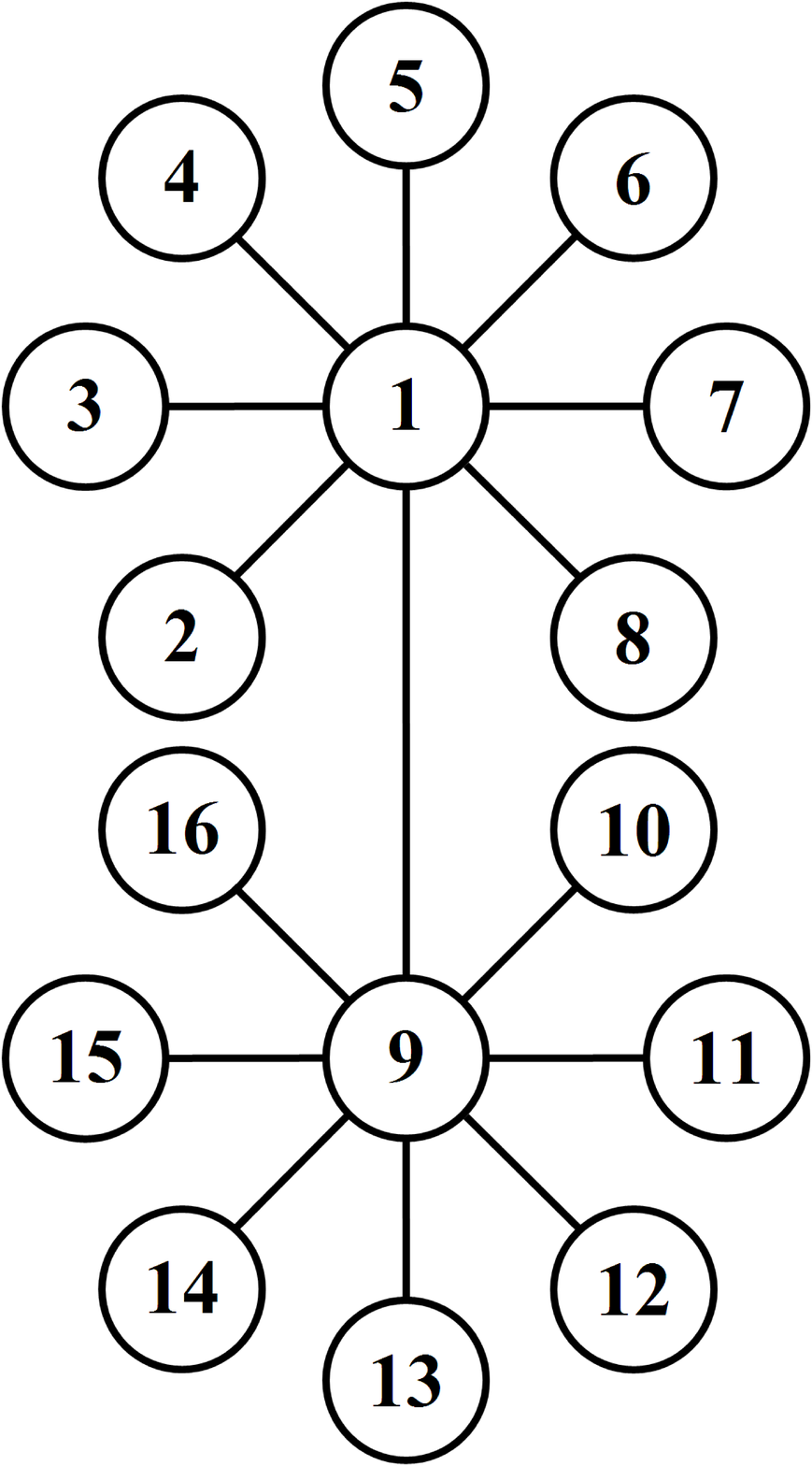} \phantom{.......}
\includegraphics[height=.22\linewidth]{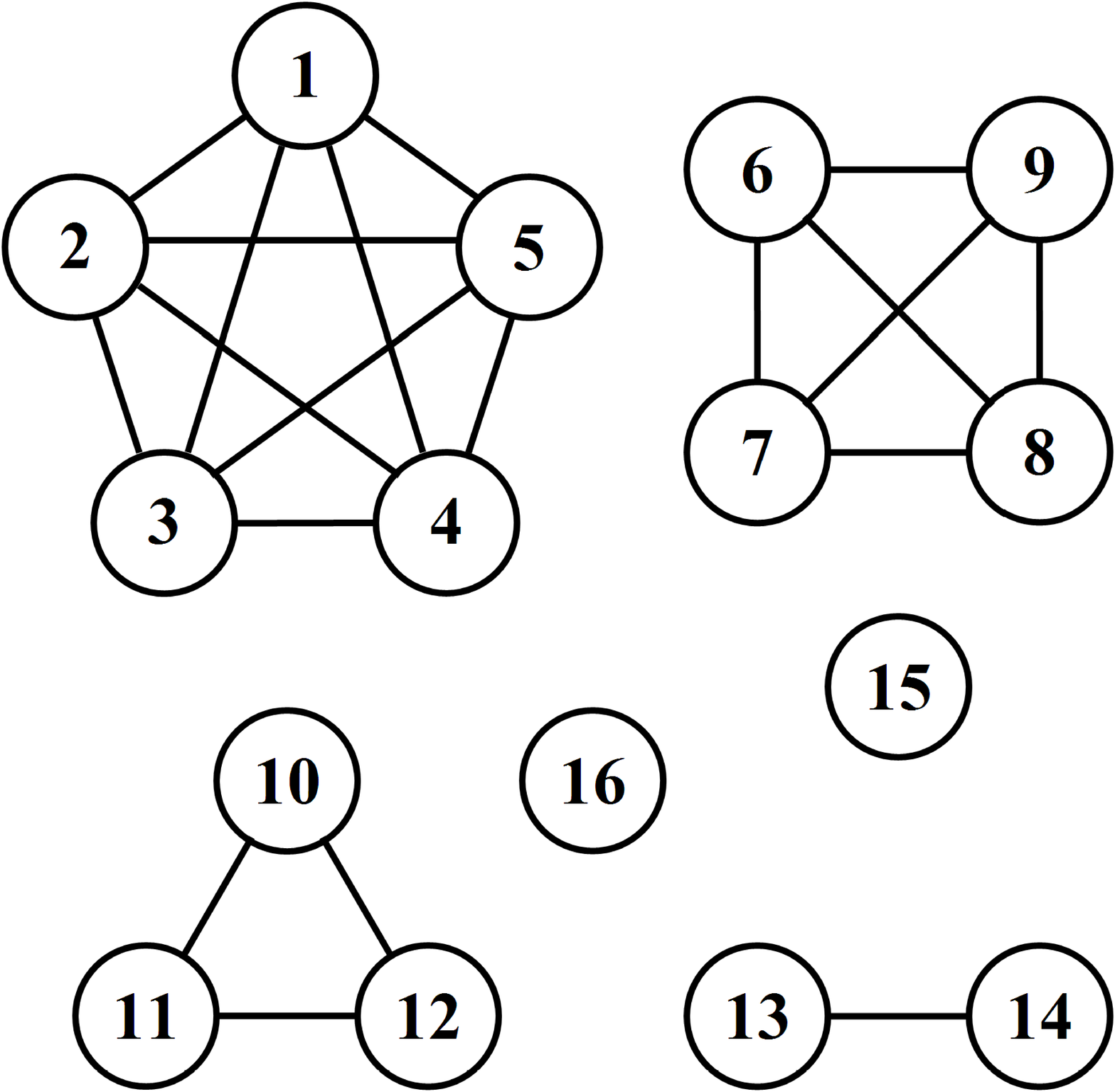} \phantom{.......}
\includegraphics[height= .22\linewidth]{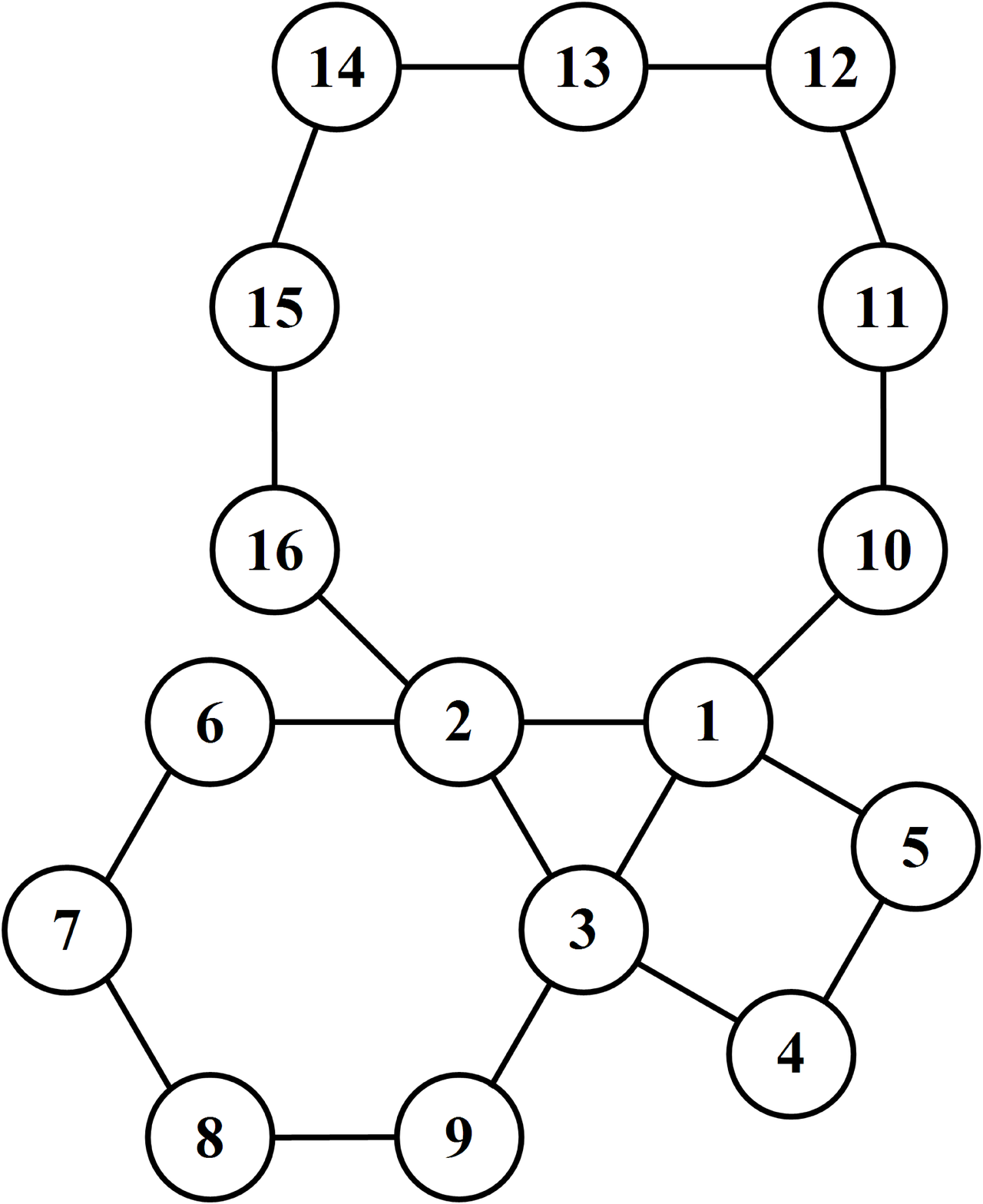} \phantom{.......}
\includegraphics[height= .22\linewidth]{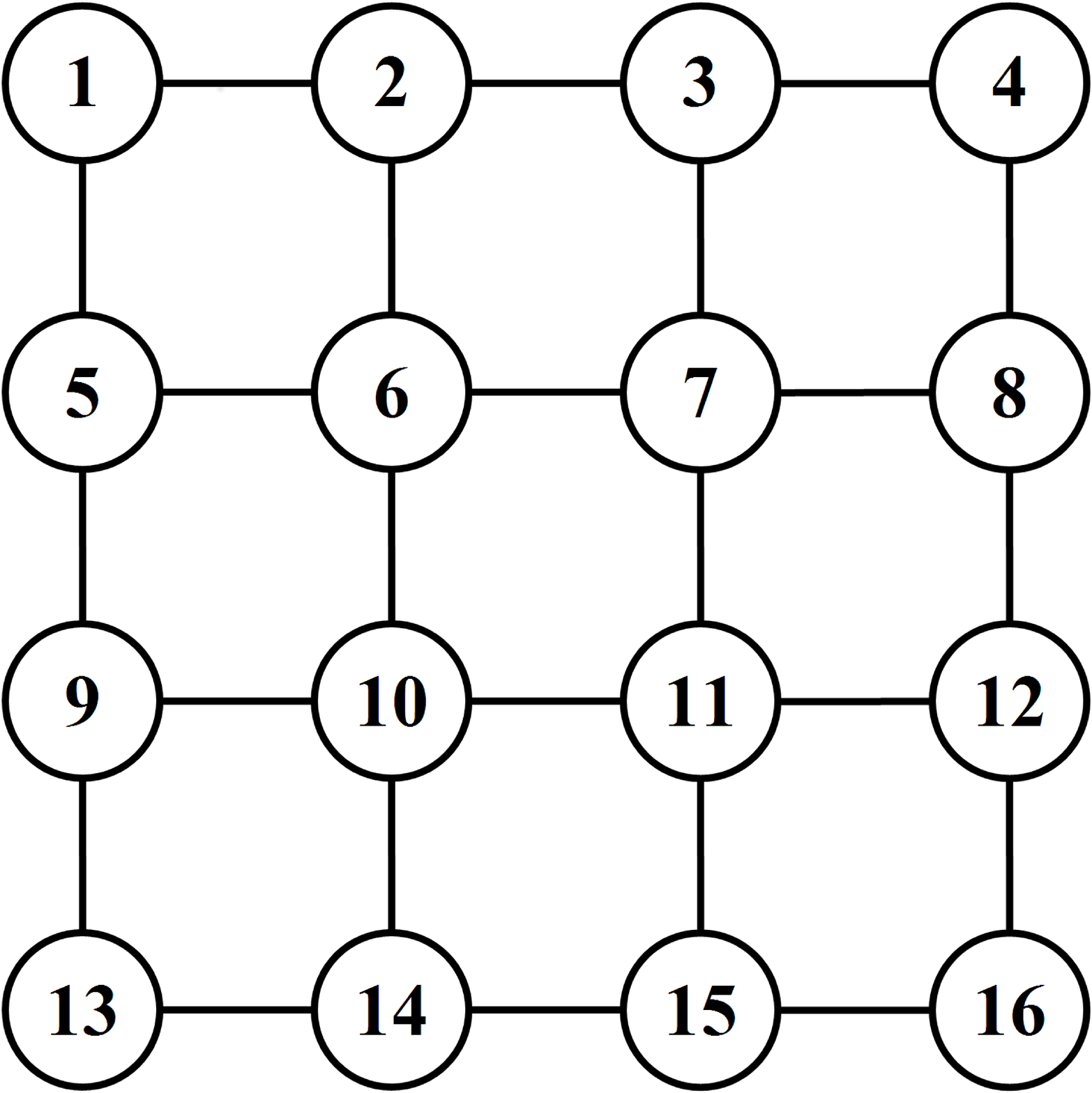}\caption{Synthetic subgraphs. Pictures appear originally in \protect\cite{MPL}.} 
\label{fig:graphs}
\end{figure}

When the graph structure is specified, we construct the corresponding precision matrices by setting elements to zeros as implied by the graph. The absolute values of the remaining off-diagonal elements are chosen randomly between $0.1$ and $0.9$ so that about half of the elements are negative. The diagonal elements are also first chosen randomly from the same interval and then a suitable vector is added to the diagonal in order to make all the eigenvalues positive, thus ascertaining the positive definiteness of the precision matrix. Finally, the matrix is inverted to get the covariance matrix and zero mean multivariate normal data is sampled using the built-in function 'mvnrnd' in Matlab.

For each of the considered dimensions, we created 25 covariance matrices, sampled a data set of 4000 observations and learned the structures using fractional marginal pseudo-likelihood, \texttt{glasso}, \texttt{space} and \texttt{NBS} with different sample sizes. Input data were scaled so that each variable had zero mean and a standard deviation of one. The sparsity promoting prior was used with the fractional marginal pseudo-likelihood methods.

\texttt{Glasso} requires a user-specified tuning parameter that affects the sparsity of the estimated precision matrix. For every input data, we computed \texttt{glasso} using 12 different values for the tuning parameter logarithmically spaced on the interval $[0.01 , 1]$. The best value for $\lambda$ was chosen according to the extended BIC criterion proposed by \cite{EBIC}:
$$
\texttt{EBIC}(\lambda) = n \phantom{.}\textnormal{tr}(\hat{\bm{\Omega}}\bm{C}) - n \log\det (\hat{\bm{\Omega}}) + K \log n + 4 K \gamma \log p ,
$$ where $n$ denotes sample size, $p$ is the number of variables, $\bm{C} = (1/n)\textbf{S}$ is the maximum likelihood estimate for the covariance matrix, $\hat{\bm{\Omega}}$ stands for the estimate of the inverse covariance for given $\lambda$ and $K$ is the number of non-zero elements in the upper-half of $\hat{\bm{\Omega}}$, that is the number of edges in the corresponding graphical model. The parameter $\gamma$ is constrained to be between $0$ and $1$. By using the value $\gamma = 0$, we would retain the ordinary BIC criterion, and increasing the $\gamma$ would encourage sparser solutions. In the experiments, we used the value $\gamma = 0.5$.

The parameter value $\lambda$ minimising the above criterion was used and the graphical model was read from the corresponding estimate of $\hat{\bm{\Omega}}$. R-package '\texttt{glasso}' \citep{GLASSOCODES} was used to perform the computations for \texttt{glasso}.

The computations for \texttt{NBS} were carried out using the Meinshausen-B\"{u}hlmann approximation also implemented in the R-package `\texttt{glasso}'. The required tuning parameter $\lambda$  was chosen automatically, as proposed by the authors \citep{MB06} to be $\lambda = (n^{-1/2})\Phi^{-1}(1- \alpha/ (2p^2)),$ where $\alpha = 0.05$ and $\Phi(\cdot)$ denotes the c.d.f. of a standard normal random variable. Parameter $\alpha$ is related to the probability of falsely connecting two separate connectivity components of the true graph, see ch. 3 in \cite{MB06}. Since the resulting inverse covariance matrix, $\hat{\bm{\Omega}}$, was not necessarily symmetric, we used the average $(1/2)(\hat{\bm{\Omega}} + \hat{\bm{\Omega}}^T)$ to determine the estimated graph structure for \texttt{NBS}.   

For the computations of \texttt{space} we used the corresponding R-package \citep{SPACECODES}. Also for this method, the user is required to specify a tuning parameter $\lambda$ controlling the $l_1$-regularisation. We selected the scale of the tuning parameter to be $s = (n^{1/2})\Phi^{-1}(1- \alpha/ (2p^2))$ with $\alpha = 0.05$. Twelve candidate values for the tuning parameter were then chosen by multiplying a vector of $12$ linearly space numbers from $0.5$ to $3.25$ by the scaling constant $s$. The best value for the $\lambda$ was then chosen according to the BIC styled criterion proposed by the authors of the method (see ch. 2.4 in \citealt{SPACE09}). The \texttt{space} algorithm was run with uniform weights for regressions in the joint loss function and iteration parameter set to $2$. For both \texttt{glasso} and \texttt{space} the range of possible tuning parameters was selected so that the best value according to the used criterion would lie strictly inside the given grid in all of the tests.

The Hamming distance results for the structure learning tests are shown in Figures \ref{fig:HDplots} and \ref{fig:HDplots2}. For the sake of clarity, \texttt{OR}- and \texttt{HC}-methods are omitted in Figure \ref{fig:HDplots}, and the comparison between fractional pseudo-likelihood is presented in Figure \ref{fig:HDplots2}. The corresponding true positive and false positive rates for dimensions $d = 64$ and $d = 1024$ are presented in Table \ref{TPFP}. All the shown results are averages computed from 25 data sets. The \texttt{AND}- and \texttt{HC}-method maintain almost equally good performance regardless the dimension considered and obtain the best overall performance in terms of Hamming distances. The \texttt{OR}-method is better on smaller dimensions where the graph is denser in the relative sense.

\begin{figure}
\centerline{
\includegraphics[width= .99\linewidth]{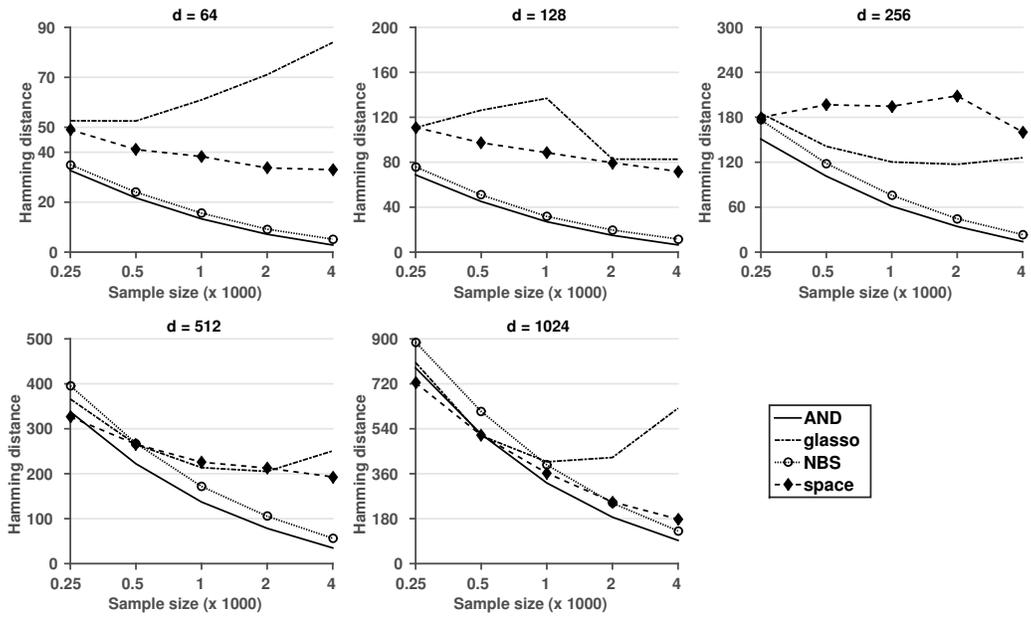}}\caption{Sample size versus Hamming distance plots. Dimensions considered are $p = 64, 128, 256, 512 \textnormal{ and } 1024$.}\label{fig:HDplots}
\end{figure}

\begin{figure}
\centerline{
\includegraphics[width= .99\linewidth]{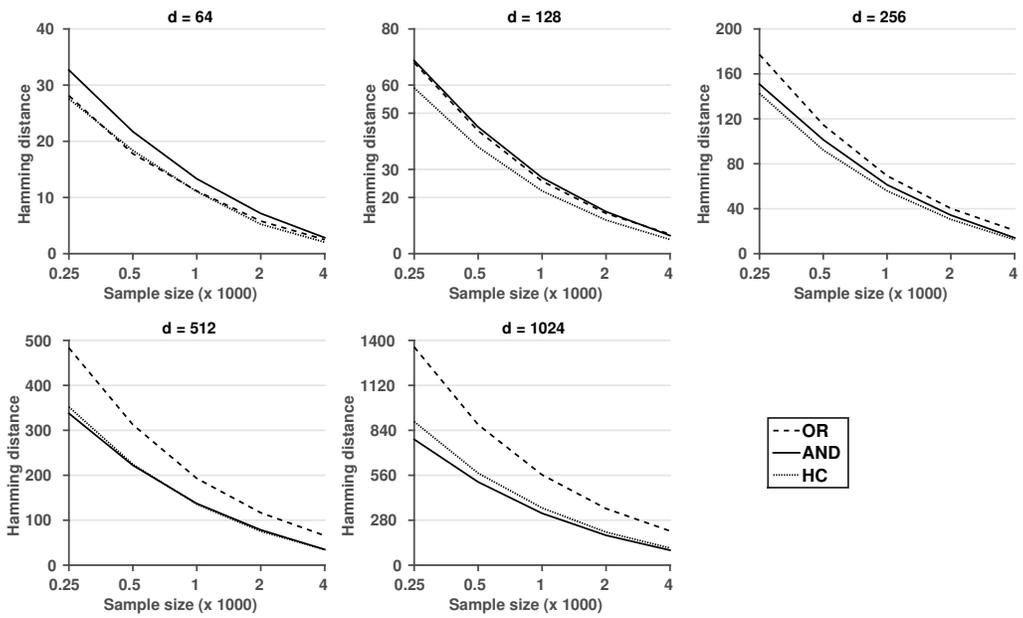}}\caption{Sample size versus Hamming distance plots for \texttt{OR}, \texttt{AND} and \texttt{HC}.}\label{fig:HDplots2}
\end{figure}

In the smaller dimensions \texttt{NBS} performs almost equally well as  \texttt{AND} and \texttt{HC}. The graphs estimated by \texttt{NBS} are really sparse resulting in a low false positive rate. The Hamming distance curves of \texttt{glasso} do not seem to decrease consistently as the sample size grows. We tried also using the ordinary BIC-criterion for choosing the tuning parameter for \texttt{glasso} but this resulted in  denser graphs and inferior Hamming distances (results not shown). The \texttt{space}-method improves its performance quite steadily as $n$ grows and has nearly always the best true positive rate. However, this comes with a cost in terms of the false positive rate which is always higher for \texttt{space} than for the best pseudo-likelihood method or \texttt{NBS}. When the sample size is less than the dimension, \texttt{space} achieves good results with Hamming distances being equal or slightly better than those of \texttt{AND}-method.

\begin{table}\caption{A table showing true positive (TP) and false positive (FP) rates for different methods and sample sizes when $p = 64, 1024$. For the full table with all the dimensions, see Appendix B.}
\begin{footnotesize}
\centerline{\begin{tabular}{c c | c c c c c c c c c c c c }
\multirow{2}{*}{$p$} & \multirow{2}{*}{$n$}& \multicolumn{2}{|c|}{\texttt{OR}}
& \multicolumn{2}{|c|}{\texttt{AND}}
& \multicolumn{2}{|c|}{\texttt{HC}}
& \multicolumn{2}{|c|}{\texttt{glasso}}
& \multicolumn{2}{|c|}{\texttt{NBS}}
& \multicolumn{2}{|c}{\texttt{space}}\\ 
& & \multicolumn{1}{|c}{TP} & \multicolumn{1}{c|}{FP}
& \multicolumn{1}{|c}{TP} & \multicolumn{1}{c|}{FP}
& \multicolumn{1}{|c}{TP} & \multicolumn{1}{c|}{FP}
& \multicolumn{1}{|c}{TP} & \multicolumn{1}{c|}{FP}
& \multicolumn{1}{|c}{TP} & \multicolumn{1}{c|}{FP}
& \multicolumn{1}{|c}{TP} & \multicolumn{1}{c}{FP}
\\
\hline
\multirow{5}{*}{$64$}&250 &0.72 &3e-03 &0.59 &4e-04 &0.68 &1e-03 &0.74 &2e-02 &0.57 &6e-04 &0.79 &2e-02 \\
&500 &0.81 &2e-03 &0.73 &2e-04 &0.78 &6e-04 &0.86 &2e-02 &0.71 &6e-04 &0.88 &2e-02 \\
&1000 &0.88 &1e-03 &0.83 &1e-04 &0.87 &4e-04 &0.93 &3e-02 &0.82 &9e-04 &0.94 &2e-02 \\
&2000 &0.95 &8e-04 &0.91 &6e-05 &0.94 &2e-04 &0.97 &4e-02 &0.90 &9e-04 &0.98 &2e-02 \\
&4000 &0.98 &4e-04 &0.96 &4e-05 &0.98 &1e-04 &0.99 &4e-02 &0.95 &8e-04 &0.99 &2e-02 \\
\hline
\multirow{5}{*}{$1024$}&250 &0.61 &2e-03 &0.50 &3e-04 &0.57 &7e-04 &0.39 &8e-05 &0.29 &1e-06 &0.56 &3e-04 \\
&500 &0.74 &1e-03 &0.66 &2e-04 &0.72 &4e-04 &0.66 &2e-04 &0.51 &2e-06 &0.72 &3e-04 \\
&1000 &0.85 &7e-04 &0.79 &1e-04 &0.83 &3e-04 &0.81 &3e-04 &0.68 &3e-06 &0.83 &3e-04 \\
&2000 &0.92 &5e-04 &0.88 &7e-05 &0.91 &2e-04 &0.91 &6e-04 &0.81 &3e-06 &0.91 &3e-04 \\
&4000 &0.97 &3e-04 &0.94 &5e-05 &0.96 &1e-04 &0.96 &1e-03 &0.90 &3e-06 &0.96 &2e-04 \\
\end{tabular}}\label{TPFP}
\end{footnotesize}
\bigskip
\end{table} 

To give a rough idea of the relative time complexity of the various methods, it took roughly half a second to estimate \texttt{OR}, \texttt{AND} and \texttt{HC} graphs in the $d = 64$ case when all the Markov blanket searches were run in a serial manner on a standard $2.3$ GHz workstation. The high-dimensional cases were solved in couple minutes. Average running times of the other methods are tabulated in Appendix B. To summarize, the \texttt{NBS}-method was clearly the fastest, whereas \texttt{space} took the longest to run. \texttt{Space} was generally fast to compute when $n$ was small but the running time varied considerably depending on the tuning parameter and grew quickly with the sample size. Even though computing a single instance of \texttt{glasso} or \texttt{space} might be faster than fractional pseudo-likelihood methods, one is usually forced to run these methods several times to find a suitable tuning parameter, thus making the actual running times much longer. Also, choosing an appropriate range for the candidate tuning parameters might prove difficult in some settings. These kind of practical difficulties make the method proposed here appealing, since no tuning parameters need to be chosen by the user. Furthermore, running the Markov blanket searches in parallel provides an easy improvement in efficiency.

\subsection{Brain Measurement Data}
We additionally illustrate the ability of the fractional marginal pseudo-likelihood to learn sparse structures by applying it to a real data set containing brain activity measurements. The whole data set consists of 2048 observations from a fMRI experiment on 90 variables corresponding to different regions of the brain. The data set is part of the R-package `brainwaver' by \cite{BRAINDATA}. 

We used the first 50 variables and fitted a first-order vector autoregressive model to remove the most significant dependencies between subsequent sample vectors. As a result, we obtain 2048 residual vectors that should by assumption follow a multivariate normal distribution. The obtained data was then split into a training set and a test set. The size of the test set was always taken to be $48$. For the training set size $m$, we considered three scenarios, where $m = 40$, $m=200$ or $m = 2000$.  Training data was always centered and scaled before applying methods. Centering of the test set was done using the means and standard deviations computed from the training data. 

For pseudo-likelihood methods and \texttt{NBS} we first learned the graphical structure and then computed the maximum likelihood estimate for the precision matrix given the structure. In case of \texttt{glasso} and \texttt{space}, the precision matrix is readily available from the output of the algorithm. In these experiments we considered also the case where the sparsity promoting graph prior was not used with pseudo-likelihood methods. 

For \texttt{glasso} we used 30 tuning parameters from the interval $[0.01 ,10]$, choosing the best according to the extended BIC criterion. The \texttt{space}-method was also computed with $30$ different tuning parameter values, scale selected as in the structure learning tests. Range of tuning parameters was again selected so that the best value according to the used BIC criterion would be strictly inside the grid. For \texttt{NBS} tuning parameter was chosen automatically as in the structure learning tests.

After the model learning, we took one data point at a time from the test set and tried to predict each of the components given the values of the others. Predicted value $\hat{X_i}$ for variable $x_i$ was computed as
$
\hat{X_i} = \sum_{j \not= i}\rho_{ij}\sqrt{{\omega_{jj}}/{\omega_{ii}}} X_j,  
$ where $\omega_{ii}$ are the diagonal elements of the estimated precision matrix and $\rho_{ij}$ are the partial correlations which can be obtained from the precision matrix. Squared difference of predicted value to the real value was recorded and the mean squared error (MSE) was used to compare the predictive performances of different methods. 

Table \ref{brainTable} shows the results of prediction tests for training sample sizes $m=40$, $m = 200$ and $m=2000$. Results for \texttt{AND} and \texttt{HC} methods are omitted, since these were generally slightly worse than the results of \texttt{OR}. The shown results are averages from 50 tests. We can observe that \texttt{OR} with a graph prior provides the lowest prediction error when the sample size is less than the dimension. When the number of observations grows, \texttt{OR} without prior obtains the best predictions. In general, the differences between the methods are quite marginal. However, the models estimated by \texttt{OR} are usually substantially sparser than the ones estimated by competing methods, especially with the highest sample size considered here, $m = 2000$. Sparse models are naturally a desirable results as they are easier to interpret. In addition to that, these conditional independences captured by \texttt{OR} are relevant in a sense that the corresponding model provides the lowest MSEs when predicting missing data.

\begin{table}\caption{A table showing average MSEs and edge densities (in parentheses) for different methods applied to brain data residuals.}
\begin{small}
\centerline{\begin{tabular}{c | c c c c c }
$m$ & \texttt{OR} & \texttt{ORprior} & \texttt{glasso} & \texttt{NBS} & \texttt{space} \\
\hline 40 & 1.002(11\%) & \textbf{0.968}(4\%) & 1.080(0\%) & 1.057(0\%) & 0.988(4\%)\\
200 & \textbf{0.713}(11\%) & 0.722(7\%) & 0.923(3\%) & 0.721(7\%) & 0.717(16\%)\\
2000 & \textbf{0.647}(22\%) & 0.650(16\%) & 0.648(34\%) & 0.650(23\%) & 0.649(32\%)\\
\end{tabular}}\label{brainTable}\bigskip \end{small}
\end{table}

We additionally studied the predictive performance using data from the same synthetic networks as in the structure learning tests. A data set of 2048 observations was created and the procedure used with the brain data was repeated. Size of the training set was $2000$ and the remaining $48$ observations formed the test set. Selection of tuning parameters for \texttt{glasso}, \texttt{space} and \texttt{NBS} was done as in the structure learning tests. Figure \ref{fig:SSEsynth} shows the MSE and the corresponding number of edges in the graphical model for dimensions $64$ and $128$. The shown results are averages computed from 25 data sets. Here, all the fractional marginal pseudo-likelihood based methods have slightly better prediction performances compared to other methods. 
\begin{figure}
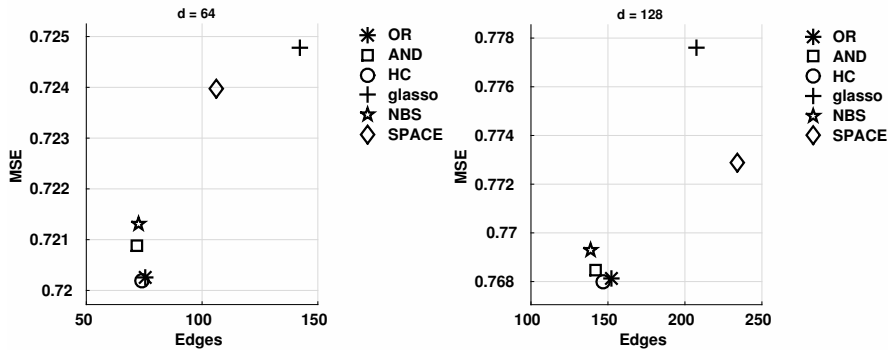

\centering
\includegraphics[width = .35\textwidth]{MSE64BW}
\includegraphics[width = .35\textwidth]{MSE128BW}\caption{MSE (vertical axis) and the number of found edges (horizontal axis) for the synthetic data.}\label{fig:SSEsynth}
\end{figure}

We emphasise that the results produced by our methods are achieved without needing to tune any hyperparameters. Only choice left to the user is whether to include the sparsity promoting prior or not. This can be a drastic advantage as demonstrated by the structure learning tests, where the suggested criteria for tuning parameter selection did not seem to be always optimal if the goal is to find the correct graphical structure. However, if the goal is to select a model with a good prediction power, the differences were not substantial and the used criteria produced good results.

\section{Discussion}
In this work we have introduced the fractional marginal pseudo-likelihood, an approximate Bayesian method for learning graphical models from multivariate normal data. One particular advantage of the method is its objectivity, since it does not necessitate the use of any domain specific knowledge which may be difficult to elicitate and use in general. In addition, the method allows graphs to be non-decomposable, which can be of substantial importance in applications. Earlier research has demonstrated that when the data generating process deviates from decomposability, graph learning methods building on the assumption of decomposability tend to yield unnecessarily dense graphs resulting from addition of spurious edges to chordless cycles. 

As shown formally, our method enjoys consistency and was found in simulation experiments to yield considerably more accurate estimates of the graph structure than the competing methods. For many applications of graphical model learning it is essential to retain solid interpretability of the estimated covariance structure, which means that high fidelity of the graph structure estimator is the most desirable property. In particular, frequently arising spurious edges may lead to confusing interpretations in the high-dimensional setting. In terms of predictive performance, methods considered here delivered similar levels of accuracy. Our divide-and-conquer type solution offers a possibility for efficient parallelization, as the initial Markov blanket search can be performed independently for each node. Hence, an attractive target for future research would include applications to very high-dimensional data sets and development of various parallelization schemes. In addition, it would be interesting to investigate altered versions by making the method more robust to outliers through relaxing the Gaussian assumption. The robust method could for instance be compared against a method by \cite{ROBUST}, which was shown to perform better than \texttt{glasso} when the data follow a heavier tailed distribution than a Gaussian.

\section*{Acknowledgements}
The research in this article was supported by Academy of Finland, grant no. 251170.
\vspace*{-8pt}

\bibliographystyle{apalike} 
\begin{small}

\end{small}

\appendix
\numberwithin{equation}{section}
\numberwithin{theorem}{section}
\numberwithin{table}{section}
\section{Appendix: Consistency Proof}

This section contains the proofs of Lemmas $1$ and $2$ which together imply the consistency of our method as formulated in Theorem 2 and Corollary 1. We follow the same notation and the assumptions given in Theorem 2.

The following proposition found in \cite{WHITTAKER} is used in the proof.

\begin{theorem}\label{deviance}
 (Based on $6.7.1;\ p. \phantom{.} 179)$ Suppose the normal random vector $\bm{x}$ can be partitioned into three $(\bm{x}_A, \bm{x}_B, \bm{x}_C)$ and all conditional independence constraints can be summarised by the single statement $\bm{x}_B \independent \bm{x}_C \mid \bm{x}_A$. If $\bm{x}_A, \bm{x}_B$ and $\bm{x}_C$ are $p$-,$q$- and $r$-dimensional respectively, then the deviance
$$
\textnormal{dev}(\bm{x}_B \independent \bm{x}_C \mid \bm{x}_A)= -n \log \frac{|\bm{S}||\bm{S}_A|}{|\bm{S}_{A\cup B}||\bm{S}_{A\cup C}|}
$$has an asymptotic chi-squared distribution with $qr$ degrees of freedom.
\end{theorem}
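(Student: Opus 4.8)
The plan is to recognize the deviance as the likelihood ratio statistic $-2\log\Lambda$ for testing the conditional independence hypothesis $\bm{x}_B \independent \bm{x}_C \mid \bm{x}_A$ against the saturated (unconstrained) Gaussian model, and then to invoke Wilks' theorem to obtain the limiting $\chi^2$ law. The key observation is that the hypothesis $\bm{x}_B \independent \bm{x}_C \mid \bm{x}_A$ defines a \emph{decomposable} Gaussian graphical model whose two cliques are $A\cup B$ and $A\cup C$ with separator $A$, and that in the Gaussian case this hypothesis is equivalent to the vanishing of the $(B,C)$ off-diagonal block of the precision matrix $\bm\Omega$.

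First I would write down the maximized log-likelihoods under both models. Writing $d = p+q+r$ and $\bm S = \bm X^T \bm X$, the saturated MLE is $\hat{\bm\Sigma} = \bm S/n$, giving maximized log-likelihood $-\tfrac{n}{2}\log|\bm S/n| - \tfrac{nd}{2} - \tfrac{nd}{2}\log(2\pi)$. Under the decomposable constraint the MLE admits the standard closed form whose determinant factorizes as $|\hat{\bm\Sigma}_0| = |\bm S_{A\cup B}/n|\,|\bm S_{A\cup C}/n| / |\bm S_A/n|$. The crucial simplification is that the clique-marginal moment-matching property of the Gaussian MLE forces $\operatorname{tr}(\hat{\bm\Omega}_0 \bm S) = nd$, exactly as in the saturated case, so the trace terms cancel in the difference of log-likelihoods. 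What remains is $-2\log\Lambda = n\log\big(|\hat{\bm\Sigma}_0|/|\bm S/n|\big)$; substituting the factorization and checking that the powers of $n$ cancel reproduces exactly $-n\log\frac{|\bm S||\bm S_A|}{|\bm S_{A\cup B}||\bm S_{A\cup C}|}$, i.e.\ the stated deviance.

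With the deviance identified as $-2\log\Lambda$, the remaining step is to count degrees of freedom. The dimension drop between the saturated model and the constrained one equals the number of independent constraints imposed, which is the number of entries in the $q\times r$ block $\bm\Omega_{BC}$ that are set to zero, namely $qr$. Applying Wilks' theorem to this smooth, regular exponential family then yields the asymptotic $\chi^2_{qr}$ distribution of the deviance under the null.

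I expect the main obstacle to be the rigorous justification of the decomposable MLE together with the trace cancellation: one must verify that the clique-marginal moment matching both produces the determinant factorization and implies $\operatorname{tr}(\hat{\bm\Omega}_0 \bm S) = nd$, and one must confirm the regularity conditions (an interior true parameter and a smooth constraint manifold of codimension $qr$) under which Wilks' theorem applies. These facts are standard for decomposable Gaussian models, so in practice I would either cite Whittaker's Proposition $6.7.1$ directly or reproduce its short derivation via the clique factorization of the Gaussian density.
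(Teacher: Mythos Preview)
Your proof sketch is correct: identifying the deviance as the likelihood ratio statistic $-2\log\Lambda$ for testing $\bm{x}_B \independent \bm{x}_C \mid \bm{x}_A$ against the saturated Gaussian model, exploiting the decomposable MLE factorization (with the trace cancellation you note), and counting the $qr$ constraints in $\bm{\Omega}_{BC}$ to invoke Wilks' theorem is exactly the standard derivation.

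However, the paper does not actually prove this statement. It is presented in the appendix as a proposition quoted from \cite{WHITTAKER} (Proposition~6.7.1) and is used as an off-the-shelf tool in the proof of Lemma~1 (overestimation). So there is nothing to compare your argument against beyond the citation itself; you have simply supplied the proof that the paper omits, and your closing remark that one could ``cite Whittaker's Proposition~6.7.1 directly'' is precisely what the authors do.
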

Here $\bm{S}$ is defined as before, but in \citeauthor{WHITTAKER} $\bm{S}$ is used to denote the sample covariance matrix. It is clear that this does not change the statement of the theorem in any manner of consequence to our purposes. Note that theorem holds also if $A = \emptyset$, since complete independence can be considered a special case of the conditional independence. In this case, term $|\bm{S}_A|$ in the expression of deviance simply disappears.

\subsection{Overestimation (Lemma 1)}

Let $\mathit{mb}^* \subset V\setminus \{ j \}$ and $fa^* = \mathit{mb}^* \cup \{ j \} $ denote the true Markov blanket and the true family of the node $x_j$, respectively. We denote the cardinality of $\mathit{mb}^*$ by $p_j$. Let $\mathit{mb}\subset V \setminus \{ j \}$ be a superset of the true Markov blanket $\mathit{mb}^*$. Denote $a = |\mathit{mb}| - p_j$. Since $\mathit{mb}^* \subset \mathit{mb}$, we have $a > 0.$
 
We want to show that
$$
\log\frac{p(\bm{X}_j\ \mid\ \bm{X}_{\mathit{mb}^*})}{p(\bm{X}_j\ \mid\ \bm{X}_{\mathit{mb}})} \to \infty
$$ in probability, as $n\to\infty$. Showing this will guarantee that fractional marginal pseudo-likelihood prefers the true Markov blanket over its supersets as the sample size increases. Remember, that the local fractional marginal pseudo-likelihood for $\mathit{mb}(j)$ was given according to
$$
p(\bm{X}_j \mid \bm{X}_{\mathit{mb(j)}}) = \pi^{-\frac{(n-1)}{2}} \frac{\Gamma \Le \frac{n+p_j}{2} \Ri}{ \Gamma \Le \frac{p_j+1}{2} \Ri}
 n^{-\frac{2p_j+1}{2}}\Le \frac{|\bm{S}_{\mathit{fa(j)}}|}{|\bm{S}_{\mathit{mb(j)}}|}\Ri^{-\frac{n-1}{2}}.
$$

Consider next the log ratio of local fractional marginal pseudo-likelihoods, for $\mathit{mb}^*$ and $\mathit{mb}$. The term containing the power of $\pi$ appears in both of the terms, and so it cancels. By noticing that 
$$
{n^{-\Le\frac{1 + 2p_j}{2}\Ri}}\bigg/{ n^{-\Le\frac{1 + 2(p_j+a)}{2}\Ri}} = n^{a},
$$ we get the following form for the ratio 

\begin{eqnarray}\label{logscore}
\log\frac{p(\bm{X}_j \mid \bm{X}_{\mathit{mb}^*})}{p(\bm{X}_j \mid \bm{X}_{\mathit{mb}})} &=& \log\frac{\Gamma\Le\frac{n + p_j}{2}\Ri}{\Gamma\Le\frac{n + p_j+ a}{2}\Ri} + \log\frac{\Gamma\Le\frac{1+ p_j + a}{2}\Ri}{\Gamma\Le\frac{1 + p_j}{2}\Ri} \nonumber\\
&+& a\log n - \Le\frac{n-1}{2}\Ri\log \Le\frac{|\bm{S}_{fa^*}||\bm{S}_{\mathit{mb}}|}{|\bm{S}_{\mathit{mb}^*}||\bm{S}_{fa}|}\Ri .
\end{eqnarray}
The second term in (\ref{logscore}) doesn't depend on $n$ so it can be omitted when considering the leading terms as $n\to\infty$. Denote $m = (n + p_j)/2$. Clearly $m \to\infty$, as $n\to\infty.$ Now we can write the first term in (\ref{logscore}) as 
\begin{equation}\label{ekatermi}
\log\frac{\Gamma(m)}{\Gamma\Le m + \frac{a}{2} \Ri} = \log\Gamma(m) - \log\Gamma\Le m + \frac{a}{2} \Ri .
\end{equation} 
Now letting $n\to\infty$ and by using Stirling's asymptotic formula for each of the terms in (\ref{ekatermi}), we get
\begin{eqnarray*}
\log\Gamma(m) - \log\Gamma\Le m + \frac{a}{2} \Ri
&=& \Le m - \frac{1}{2} \Ri \log m - m \\
&-&\Le \Le m + \frac{a}{2} - \frac{1}{2} \Ri \log \Le m + \frac{a}{2}\Ri - \Le m + \frac{a}{2} \Ri  \Ri +O(1).
\end{eqnarray*}
We see that $m$-terms cancel and the constant $a/2$ in the second term can be omitted. After rearranging the terms, the result can be written as
\begin{equation}\label{approks}
m \log \Le \frac{m}{m + \frac{a}{2} } \Ri + \frac{1}{2}\log \Le \frac{ m + \frac{a}{2}}{m} \Ri - \frac{a}{2}   \log \Le m + \frac{a}{2}\Ri +O(1).
\end{equation}
As $n\to\infty$, we have that
\begin{eqnarray*}
m \log \Le \frac{m}{m + \frac{a}{2} } \Ri = \frac{1}{2}\log \Le \frac{1}{1 + \frac{a}{2m}} \Ri^{2m} \to \frac{1}{2} \log (\exp(-a)) = -\frac{a}{2}
\end{eqnarray*}
and
$$
 \frac{1}{2}\log \Le \frac{ m + \frac{a}{2}}{m} \Ri = \frac{1}{2}\log \Le 1 + \frac{a}{2m} \Ri \to 0.
$$
Thus, we can write (\ref{ekatermi}) asymptotically as
$$\label{ekalopullinen}
\log\frac{\Gamma(m)}{\Gamma\Le m + \frac{a}{2} \Ri} = -\frac{a}{2}\log \Le m + \frac{a}{2}\Ri + O(1),
$$
or equivalently by using variable $n$
\begin{equation}
\log\frac{\Gamma\Le\frac{n + p_j}{2}\Ri}{\Gamma\Le\frac{n + p_j+ a}{2}\Ri} = -\frac{a}{2}\log \Le \frac{n+ p_j + a}{2}\Ri + O(1).
\end{equation}
No we can simplify the original formula (\ref{logscore}) by combining the first and the third term
\begin{eqnarray}
\log\frac{\Gamma\Le\frac{n + p_j}{2}\Ri}{\Gamma\Le\frac{n + p_j+ a}{2}\Ri} + a \log n
&=& -\frac{a}{2}\log \Le \frac{n+ p_j + a}{2}\Ri + \frac{a}{2} \log n^2 + O(1) \nonumber\\
&=& \frac{a}{2} \log n + O(1).
\end{eqnarray}

Consider next the last term in (\ref{logscore})
\begin{equation}\label{determinanttitermi}
- \Le\frac{n-1}{2}\Ri\log \Le\frac{|\bm{S}_{fa^*}||\bm{S}_{\mathit{mb}}|}{|\bm{S}_{\mathit{mb}^*}||\bm{S}_{fa}|}\Ri .
\end{equation}
Since $\mathit{mb}^* \subset \mathit{mb},$ we can write $\mathit{mb} = \mathit{mb}^* \cup R,$ where $R$ denotes the set of redundant variables in $\mathit{mb}$. Recall the Theorem \ref{deviance} and notice that by denoting $$A = \mathit{mb}^*,\ B = \{j \} \textnormal{ and } C = R,$$ it holds that $\bm{x}_B \independent \bm{x}_C \mid \bm{x}_A,$ since $\mathit{mb}^*$ was the true Markov blanket of $x_j$. Note also that in this case $qr= 1\cdot a=a$. Now the deviance can be written as
\begin{equation}
\textnormal{dev}(x_j \independent \bm{x}_R \mid \bm{x}_{\mathit{mb}^*}) = -n \log \Le\frac{|\bm{S}_{fa}||\bm{S}_{\mathit{mb}^*}|}{|\bm{S}_{fa^*}||\bm{S}_{\mathit{mb}}|}\Ri , 
\end{equation}
which is essentially just the determinant term (\ref{determinanttitermi}) multiplied by a constant $-2$. Let us denote $D_n = \textnormal{dev}(x_j \independent \bm{x}_R \mid \bm{x}_{\mathit{mb}^*})$. The determinant term gets the following representation
\begin{eqnarray}
- \Le\frac{n-1}{2}\Ri\log \Le\frac{|\bm{S}_{fa^*}||\bm{S}_{\mathit{mb}}|}{|\bm{S}_{\mathit{mb}^*}||\bm{S}_{fa}|}\Ri &= -\frac{n}{2}\log \Le\frac{|\bm{S}_{fa^*}||\bm{S}_{\mathit{mb}}|}{|\bm{S}_{\mathit{mb}^*}||\bm{S}_{fa}|}\Ri + O_p(1)\nonumber \\
&= -\frac{D_n}{2} + O_p(1).
\end{eqnarray} The $O_p(1)$ error on the first line comes from omitting the term  $$\frac{1}{2}\log \Le\frac{|\bm{S}_{fa^*}||\bm{S}_{\mathit{mb}}|}{|\bm{S}_{\mathit{mb}^*}||\bm{S}_{fa}|}\Ri. $$ Asymptotically, it holds that $D_n \sim \chi^2_a$. In other words the sequence $(D_n)$ converges in distribution to a random variable $D$, where $D \sim \chi_a^2.$ Convergence in distribution implies that the sequence $(D_n)$ is bounded in probability, that is, $D_n = O_p(1)$ for all $n$.

Combining the above findings, asymptotically
$$
- \Le\frac{n-1}{2}\Ri\log \Le\frac{|\bm{S}_{fa^*}||\bm{S}_{\mathit{mb}}|}{|\bm{S}_{\mathit{mb}^*}||\bm{S}_{fa}|}\Ri = O_p(1).
$$
Adding the results together, we have shown that, as $n\to\infty$
\begin{equation}
\log\frac{p(\bm{X}_j\ |\ \bm{X}_{\mathit{mb}^*})}{p(\bm{X}_j\ |\ \bm{X}_{\mathit{mb}})} = \frac{a}{2} \log n  + O_p(1). 
\end{equation}
Now since $a > 0,$ then
$$
\log\frac{p(\bm{X}_j\ |\ \bm{X}_{\mathit{mb}^*})}{p(\bm{X}_j\ |\ \bm{X}_{\mathit{mb}})} \to \infty
$$ in probability, as $n\to\infty$.

\subsection{Underestimation (Lemma 2)}

Let $\mathit{mb}^{*}$ denote the true Markov blanket of node $x_j$ and $\mathit{mb} \subset \mathit{mb}^{*}$. Let $A \subset V \backslash fa^*.$ Remember that $fa^*$ was defined to be $\mathit{mb}^* \cup \{j\}.$ Note that $A$ could also be an empty set. We want to show that
$$\log\frac{p(\bm{X}_j\ |\ \bm{X}_{\mathit{mb}^*\cup A})}{p(\bm{X}_j\ |\ \bm{X}_{\mathit{mb}\cup A})} \to \infty
$$in probability, as $n\to\infty.$ Denote $|\mathit{mb}^*\cup A| = p_j$ and $a = |\mathit{mb}\cup A| - p_j$. Here $a < 0,$ since $\mathit{mb}$ is a subset of the true Markov blanket. We can now proceed similarly as in the overestimation part, and write the log ratio as
\begin{eqnarray}\label{logscore2}
\log\frac{p(\bm{X}_j \mid \bm{X}_{\mathit{mb}^*\cup A})}{p(\bm{X}_j \mid \bm{X}_{\mathit{mb}\cup A})} &=& \log\frac{\Gamma\Le\frac{n + p_j}{2}\Ri}{\Gamma\Le\frac{n + p_j+ a}{2}\Ri} + \log\frac{\Gamma\Le\frac{1+ p_j + a}{2}\Ri}{\Gamma\Le\frac{1 + p_j}{2}\Ri} \nonumber\\
&+& a\log n - \Le\frac{n-1}{2}\Ri\log \Le\frac{|\bm{S}_{fa^*\cup A}||\bm{S}_{\mathit{mb}\cup A}|}{|\bm{S}_{\mathit{mb}^*\cup A}||\bm{S}_{fa\cup A}|}\Ri .
\end{eqnarray}
The first three terms are just the same ones appearing in (\ref{logscore}), which allows us to write
\begin{eqnarray}\label{logscore3}
\log\frac{p(\bm{X}_j \mid \bm{X}_{\mathit{mb}^*\cup A})}{p(\bm{X}_j \mid \bm{X}_{\mathit{mb}\cup A})} =\frac{a}{2}\log n - \Le\frac{n-1}{2}\Ri\log \Le\frac{|\bm{S}_{fa^*\cup A}||\bm{S}_{\mathit{mb}\cup A}|}{|\bm{S}_{\mathit{mb}^*\cup A}||\bm{S}_{fa\cup A}|}\Ri + O(1) .
\end{eqnarray}

\noindent Consider next the determinant term
\begin{equation}\label{determinanttitermi2}
- \Le\frac{n-1}{2}\Ri\log \Le\frac{|\bm{S}_{fa^*\cup A}||\bm{S}_{\mathit{mb}\cup A}|}{|\bm{S}_{\mathit{mb}^*\cup A}||\bm{S}_{fa\cup A}|}\Ri .
\end{equation}
By the definition of $\bm{S}$, it is clear that
$$
\frac{\bm{S}}{n} = \hat{\bm{\Sigma}} = \frac{1}{n}\sum_{i = 1}^n \bm{X}_i^{T}\bm{X}_i ,
$$ where $\hat{\bm{\Sigma}}$ is the maximum likelihood estimate of the true covariance matrix. As $n$ approaches infinity, the maximum likelihood estimate converges in probability to the true covariance matrix $\bm{\Sigma}$.

Letting $n\to\infty$, we can write the argument of logarithm in (\ref{determinanttitermi2}) as
\begin{equation}\label{ratio}
\left({\frac{|\bm{\Sigma}_{fa^*\cup A}|}{|\bm{\Sigma}_{\mathit{mb}^*\cup A}|}}\right)\bigg/ \left({\frac{|\bm{\Sigma}_{fa\cup A}|}{|\bm{\Sigma}_{\mathit{mb}\cup A}|}}\right)
\end{equation}
We can simplify the numerator and denominator by noticing that $\bm{\Sigma}_{fa^* \cup A}$ can be partitioned as

\begin{eqnarray*}
\left(\begin{array}{cccc} \textnormal{var}(x_j) & \phantom{f} & \textnormal{cov}(x_j, \bm{x}_{\mathit{mb}^*\cup A}) & \phantom{f} \\
\textnormal{cov}(x_j, \bm{x}_{\mathit{mb}^*\cup A})^T & \phantom{s}  &\bm{\Sigma}_{\mathit{mb}^*\cup A} & \phantom{s}
\end{array}\right),
\end{eqnarray*} where $\textnormal{var}(x_j)$ is the variance of variable $x_j$, $\textnormal{cov}(x_j, \bm{x}_{\mathit{mb}^*\cup A})$ is a horizontal vector containing covariances between $x_j$ and each of the variables in set $\mathit{mb}^*\cup A$. Using basic results concerning determinants of a partitioned matrix (see, for instance, \citealt{PRESS}), we have
\begin{eqnarray*}
|\bm{\Sigma}_{fa^*\cup A}| &=& |\bm{\Sigma}_{\mathit{mb}^*\cup A}| \cdot (\textnormal{var}(x_j)- \textnormal{cov}(x_j, \ \bm{x}_{\mathit{mb}^*\cup A}) \Le\bm{\Sigma}_{\mathit{mb}^*\cup A}\Ri^{-1}\textnormal{cov}(x_j,\ \bm{x}_{\mathit{mb}^*\cup A})^T) \\
&=& |\bm{\Sigma}_{\mathit{mb}^*\cup A}| \cdot \Le \textnormal{var}(x_j) - \textnormal{var}(\hat{x}_j[\bm{x}_{\mathit{mb}^*\cup A}]) \Ri \\
&=& |\bm{\Sigma}_{\mathit{mb}^*\cup A}| \cdot \textnormal{var}\Le x_j \mid  \bm{x}_{\mathit{mb}^*\cup A}\Ri,
\end{eqnarray*}
where we have used $\hat{x}_j[\bm{x}_{\mathit{mb}^*\cup A}]$ to denote the linear least squares predictor of $x_j$ from variables in the set $\mathit{mb}^*\cup A$. The last equality follows from the definition of partial variance, which is the residual variance of $x_j$ after subtracting the variance based on linear least squares predictor $\hat{x}_j[\bm{x}_{\mathit{mb}^*\cup A}]$.
Using this, we get
$$
\displaystyle\frac{|\bm{\Sigma}_{fa^*\cup A}|}{|\bm{\Sigma}_{\mathit{mb}^*\cup A}|} = \textnormal{var}\Le x_j  \mid  \bm{x}_{\mathit{mb}^*\cup A}\Ri .
$$ 
Applying this also for the ratio of $|\bm{\Sigma}_{fa\cup A}|$ and $|\bm{\Sigma}_{\mathit{mb}\cup A}|$, lets us to write (\ref{ratio}) as
\begin{equation}\label{detsievennetty}
\left({\frac{|\bm{\Sigma}_{fa^*\cup A}|}{|\bm{\Sigma}_{\mathit{mb}^*\cup A}|}}\right)\bigg/ \left({\frac{|\bm{\Sigma}_{fa\cup A}|}{|\bm{\Sigma}_{\mathit{mb}\cup A}|}}\right) =
\frac{\textnormal{var}\Le x_j  \mid  \bm{x}_{\mathit{mb}^*\cup A}\Ri}
{\textnormal{var}\Le x_j  \mid  \bm{x}_{\mathit{mb}\cup A}\Ri} .
\end{equation}
The form (\ref{detsievennetty}) makes it easier to analyse the behaviour of the determinant term and we can write the log ratio in (\ref{logscore2}) as follows
\begin{eqnarray}\label{logscore4}
\log\frac{p(\bm{X}_j \mid \bm{X}_{\mathit{mb}^*\cup A})}{p(\bm{X}_j \mid \bm{X}_{\mathit{mb}\cup A})} =\frac{a}{2}\log n - \frac{n}{2}\log \frac{\textnormal{var}\Le x_j \mid \bm{x}_{\mathit{mb}^*\cup A}\Ri}
{\textnormal{var}\Le x_j \mid \bm{x}_{\mathit{mb}\cup A}\Ri} + O_p(1) .
\end{eqnarray}
By investigating (\ref{logscore4}), it is clear that consistency is achieved if we can show that
\begin{equation}\label{inequlity}
\frac{\textnormal{var}\Le x_j \mid \bm{x}_{\mathit{mb}^*\cup A}\Ri}
{\textnormal{var}\Le x_j \mid \bm{x}_{\mathit{mb}\cup A}\Ri} < 1.
\end{equation} The equation (\ref{inequlity}) is equivalent to
\begin{eqnarray}\label{inequality2}
\textnormal{var}\Le x_j \mid \bm{x}_{\mathit{mb}^*\cup A}\Ri &<& \textnormal{var}\Le x_j \mid \bm{x}_{\mathit{mb}\cup A}\Ri \nonumber \\
\Leftrightarrow \textnormal{var}(x_j) - \textnormal{var}(\hat{x}_j[\bm{x}_{\mathit{mb}^*\cup A}]) &<& \textnormal{var}(x_j) - \textnormal{var}(\hat{x}_j[\bm{x}_{\mathit{mb}\cup A}]) \nonumber \\
\Leftrightarrow \textnormal{var}(\hat{x}_j[\bm{x}_{\mathit{mb}^*\cup A}]) &>& \textnormal{var}(\hat{x}_j[\bm{x}_{\mathit{mb}\cup A}]).
\end{eqnarray} Now assume $\mathit{mb} \neq \emptyset$, and denote the missing true Markov blanket members by $R = \mathit{mb}^* \backslash \mathit{mb}$. Then by using the additivity of the explained variance (see \citeauthor{WHITTAKER} p.138), we can write the left side of (\ref{inequality2}) as
\begin{eqnarray*}
\textnormal{var}(\hat{x}_j[\bm{x}_{\mathit{mb}^*\cup A}]) &=&  \textnormal{var}(\hat{x}_j[\bm{x}_{\mathit{mb}\cup A \cup R}]) \\
& =& \textnormal{var}(\hat{x}_j[\bm{x}_{\mathit{mb}\cup A}]) + \textnormal{var}(\hat{x}_j[\bm{x}_R - \hat{\bm{x}}_R [\bm{x}_{\mathit{mb}\cup A}]]).
\end{eqnarray*}
The term $\textnormal{var}(\hat{x}_j[\bm{x}_R - \hat{x}_R [\bm{x}_{\mathit{mb}\cup A}]])> 0,$ since elements of $R$ are in $x_j's$ Markov blanket. This shows that (\ref{inequlity}) holds. 

If $\mathit{mb} = \emptyset,$ the inequality (\ref{inequality2}) can be written as
$$
\textnormal{var}(\hat{x}_j[\bm{x}_{\mathit{mb}^*\cup A}]) > \textnormal{var}(\hat{x}_j[\bm{x}_A]).
$$Using again the additivity of the explained variance, this becomes
$$
\textnormal{var}(\hat{x}_j[\bm{x}_A]) + \textnormal{var}(\hat{x}_j[\bm{x}_{\mathit{mb}^*} - \hat{x}_{\mathit{mb}^*}[\bm{x}_A]]) > \textnormal{var}(\hat{x}_j[\bm{x}_A]),
$$which clearly holds.

All in all, we have showed that
$$
- \frac{n}{2}\log \frac{\textnormal{var}\Le x_j \mid \bm{x}_{\mathit{mb}^*\cup A}\Ri}
{\textnormal{var}\Le x_j \mid \bm{x}_{\mathit{mb}\cup A}\Ri} \to \infty, 
$$in probability, as $n \to\infty.$ This implies that
$$\log\frac{p(\bm{X}_j \mid \bm{X}_{\mathit{mb}^*\cup A})}{p(\bm{X}_j \mid \bm{X}_{\mathit{mb}\cup A})} \to \infty
$$in probability, as $n\to\infty,$ since $n$ increases faster than $(a/2)\log n$ decreases.

\section{Appendix: Additional Numerical Results}
Table \ref{table1} contains results for all the considered dimensions in the structure learning tests with synthetic data.

\begin{table}\caption{A table showing true positive (TP) and false positive (FP) rates in structure learning tests for different methods and sample sizes.}\label{table1}\begin{footnotesize}
\centerline{\begin{tabular}{c c | c c c c c c c c c c c c }
\multirow{2}{*}{$p$} & \multirow{2}{*}{$n$}& \multicolumn{2}{|c|}{\texttt{OR}}
& \multicolumn{2}{|c|}{\texttt{AND}}
& \multicolumn{2}{|c|}{\texttt{HC}}
& \multicolumn{2}{|c|}{\texttt{glasso}}
& \multicolumn{2}{|c|}{\texttt{NBS}}
& \multicolumn{2}{|c}{\texttt{space}}\\
& & \multicolumn{1}{|c}{TP} & \multicolumn{1}{c|}{FP}
& \multicolumn{1}{|c}{TP} & \multicolumn{1}{c|}{FP}
& \multicolumn{1}{|c}{TP} & \multicolumn{1}{c|}{FP}
& \multicolumn{1}{|c}{TP} & \multicolumn{1}{c|}{FP}
& \multicolumn{1}{|c}{TP} & \multicolumn{1}{c|}{FP}
& \multicolumn{1}{|c}{TP} & \multicolumn{1}{c}{FP}
\\
\hline
\multirow{5}{*}{$64$}&250 &0.72 &3e-03 &0.59 &4e-04 &0.68 &1e-03 &0.74 &2e-02 &0.57 &6e-04 &0.79 &2e-02 \\
&500 &0.81 &2e-03 &0.73 &2e-04 &0.78 &6e-04 &0.86 &2e-02 &0.71 &6e-04 &0.88 &2e-02 \\
&1000 &0.88 &1e-03 &0.83 &1e-04 &0.87 &4e-04 &0.93 &3e-02 &0.82 &9e-04 &0.94 &2e-02 \\
&2000 &0.95 &8e-04 &0.91 &6e-05 &0.94 &2e-04 &0.97 &4e-02 &0.90 &9e-04 &0.98 &2e-02 \\
&4000 &0.98 &4e-04 &0.96 &4e-05 &0.98 &1e-04 &0.99 &4e-02 &0.95 &8e-04 &0.99 &2e-02 \\
\hline
\multirow{5}{*}{$128$}&250 &0.71 &3e-03 &0.58 &4e-04 &0.67 &1e-03 &0.72 &8e-03 &0.52 &2e-04 &0.77 &9e-03 \\
&500 &0.81 &2e-03 &0.72 &2e-04 &0.78 &5e-04 &0.85 &1e-02 &0.68 &1e-04 &0.87 &1e-02 \\
&1000 &0.88 &1e-03 &0.83 &1e-04 &0.87 &3e-04 &0.91 &2e-02 &0.81 &2e-04 &0.93 &1e-02 \\
&2000 &0.94 &6e-04 &0.91 &6e-05 &0.93 &1e-04 &0.93 &9e-03 &0.89 &2e-04 &0.97 &9e-03 \\
&4000 &0.98 &4e-04 &0.96 &6e-05 &0.97 &9e-05 &0.97 &1e-02 &0.94 &3e-04 &0.99 &9e-03 \\
\hline
\multirow{5}{*}{$256$}&250 &0.68 &2e-03 &0.56 &4e-04 &0.64 &9e-04 &0.52 &1e-03 &0.44 &3e-05 &0.68 &2e-03 \\
&500 &0.79 &2e-03 &0.70 &2e-04 &0.76 &5e-04 &0.71 &2e-03 &0.62 &3e-05 &0.82 &4e-03 \\
&1000 &0.88 &1e-03 &0.82 &1e-04 &0.85 &3e-04 &0.84 &2e-03 &0.76 &5e-05 &0.91 &5e-03 \\
&2000 &0.94 &6e-04 &0.90 &8e-05 &0.92 &2e-04 &0.92 &3e-03 &0.86 &6e-05 &0.96 &6e-03 \\
&4000 &0.98 &4e-04 &0.96 &5e-05 &0.97 &1e-04 &0.97 &4e-03 &0.93 &7e-05 &0.99 &5e-03 \\
\hline
\multirow{5}{*}{$512$}&250 &0.65 &2e-03 &0.53 &3e-04 &0.61 &8e-04 &0.48 &3e-04 &0.37 &9e-06 &0.61 &6e-04 \\
&500 &0.77 &1e-03 &0.69 &2e-04 &0.74 &5e-04 &0.69 &5e-04 &0.57 &1e-05 &0.76 &9e-04 \\
&1000 &0.86 &8e-04 &0.80 &1e-04 &0.84 &3e-04 &0.82 &8e-04 &0.73 &1e-05 &0.86 &1e-03 \\
&2000 &0.93 &5e-04 &0.89 &7e-05 &0.92 &2e-04 &0.91 &1e-03 &0.83 &1e-05 &0.93 &1e-03 \\
&4000 &0.97 &4e-04 &0.95 &4e-05 &0.97 &1e-04 &0.97 &2e-03 &0.91 &1e-05 &0.97 &1e-03 \\
\hline
\multirow{5}{*}{$1024$}&250 &0.61 &2e-03 &0.50 &3e-04 &0.57 &7e-04 &0.39 &8e-05 &0.29 &1e-06 &0.56 &3e-04 \\
&500 &0.74 &1e-03 &0.66 &2e-04 &0.72 &4e-04 &0.66 &2e-04 &0.51 &2e-06 &0.72 &3e-04 \\
&1000 &0.85 &7e-04 &0.79 &1e-04 &0.83 &3e-04 &0.81 &3e-04 &0.68 &3e-06 &0.83 &3e-04 \\
&2000 &0.92 &5e-04 &0.88 &7e-05 &0.91 &2e-04 &0.91 &6e-04 &0.81 &3e-06 &0.91 &3e-04 \\
&4000 &0.97 &3e-04 &0.94 &5e-05 &0.96 &1e-04 &0.96 &1e-03 &0.90 &3e-06 &0.96 &2e-04 \\
\end{tabular}}\end{footnotesize}
\end{table}

Average running times for the different methods in the structure learning tests are presented in Table \ref{tablew}. For the marginal pseudo-likelihood methods and \texttt{NBS}, the shown times are average values computed from $10$ tests. Note, that the result shown for the \texttt{HC}-method is the time it took to perform the hill-climb after the \texttt{OR}-graph was first estimated.  In each of the ten tests (with given sample size and dimension), \texttt{space} and \texttt{glasso} were computed using $12$ different values for the tuning parameters as explained in the paper. Shown results for these two methods are averages computed over different tests and also over different tuning parameter values. All the timing experiments were run in Matlab or R on a standard laptop with a $2.30$ GHz quad-core processor.

\begin{table}\caption{A table containing average running times in seconds for the considered methods in the structure learning tests.}\label{tablew}
\centerline{\begin{tabular}{c c | c c c c c }
$d$ &$n$& \texttt{OR/AND} & \texttt{HC} & \texttt{glasso} & \texttt{NBS} & \texttt{space} \\
\hline \hline
\multirow{3}{*}{$64$}&250 &0.452&0.059&0.021&0.002&0.045\\
&1000 &0.548&0.076&0.015&0.002&0.301\\
&4000 &0.563&0.090&0.009&0.002&1.341\\
\hline
\multirow{3}{*}{$128$}&250 &1.843&0.116&0.135&0.008&0.158\\
&1000 &2.064&0.151&0.091&0.010&1.082\\
&4000 &2.243&0.182&0.055&0.007&6.965\\
\hline
\multirow{3}{*}{$256$}&250 &7.476&0.249&1.184&0.043&0.766\\
&1000 &8.300&0.316&0.657&0.044&3.943\\
&4000 &8.905&0.370&0.423&0.047&39.491\\
\hline
\multirow{3}{*}{$512$}&250 &31.595&0.604&11.776&0.328&3.907\\
&1000 &33.838&0.684&5.234&0.337&22.877\\
&4000 &36.229&0.807&3.308&0.327&185.111\\
\end{tabular}}
\end{table}

\end{document}